\newtheorem{MyThe}{Theorem}
\newtheorem{MyCor}{Corollary}
\newtheorem{MyLem}{Lemma}
\begin{document}

\title{A Novel Stochastic Stratified Average Gradient Method: Convergence Rate and Its Complexity
\thanks{Research supported by National Bureau of Statistics of China($2016LY64$), Natural Science Foundation of Guangdong Province, China (10451032001006140,2015A030313623) .}}
\author{Aixiang(Andy) Chen$^1$,Bingchuan Chen$^1$\thanks{Corresponding author. Email address:
chbingch@mail2.sysu.edu.cn.}, Xiaolong Chai$^1$, Rui Bian$^2$, Hengguang Li$^3$
\\ \small $^1$school of Statistics and Mathematics, Guangdong University of Finance \& Economics, \\ \small Guangzhou 510320,
China
\\ \small $^2$School of Public Administration, Guangdong University of Finance \& Economics, \\ \small Guangzhou 510320, China
\\ \small $^3$Department of Mathematics, Wayne State University,
\\ \small Detroit, MI, 48202, USA
}

\date{}
\maketitle

\begin{abstract}
SGD (Stochastic Gradient Descent) is a popular algorithm for large scale optimization problems due to its low iterative cost. However, SGD can not achieve linear convergence rate as FGD (Full Gradient Descent) because of the inherent gradient variance. To attack the problem, mini-batch SGD was proposed to get a trade-off in terms of convergence rate and iteration cost. In this paper, a general CVI (Convergence-Variance Inequality) equation is presented to state formally the interaction of convergence rate and gradient variance. Then a novel algorithm named SSAG (Stochastic Stratified Average Gradient) is introduced to reduce gradient variance based on two techniques, stratified sampling and averaging over iterations that is a key idea in SAG (Stochastic Average Gradient). Furthermore, SSAG can achieve linear convergence rate of $\mathcal {O}((1-\frac{\mu}{8CL})^k)$ at smaller storage and iterative costs, where $C\geq 2$ is the category number of training data. This convergence rate depends mainly on the variance between classes, but not on the variance within the classes. In the case of $C\ll N$ ($N$ is the training data size), SSAG's convergence rate is much better than SAG's convergence rate of $\mathcal {O}((1-\frac{\mu}{8NL})^k)$. Our experimental results show SSAG outperforms SAG and many other algorithms.
$\newline$ $\newline$\noindent\textbf{Key words}: Stochastic Gradient Method, Convergence Rate, Smooth, Strongly Convex, Stratified Sampling
\end{abstract}

\section{Introduction}\label{intro}

Recently, with the development and increasing popularity of deep learning, it is quite routine to use very large data set to train a very deep neural network for attaining a better model when applying deep learning to practical problems, such as image understanding, natural language processing, speech recognition \cite{Goodfellow2016,NeverovaLCVL17,KrizhevskySH12,SercuPKL16,ConneauSBL16,simonyan2014convolutional}. Training a deep model can be seen as an optimization problem and therefore, more and more corresponding large scale optimization problems are out there to be solved. Thus, it is very important to develop novel optimization algorithm with fast convergence rate while retaining low iteration costs and low storage requirements to train a very deep model using very large data.\\

SGD (Stochastic Gradient Descent) \cite{robbins1951stochastic} is a popular algorithm in optimization because of its low iteration costs. However, compared with linear convergence rate of FGD \cite{Ruder2016An,nesterov2013introductory}(Full Gradient Descent), SGD can only achieve sub-linear convergence rate because of the existence of gradient variance. So it is an interesting but challenging problem to improve SGD's convergence rate while retaining its low iteration cost.\\

A lot of research effort had been dedicated to addressing the issue. Mini-batch SGD \cite{Hazan2014,Li2014,Cotter2011} and SGD-ss (Stochastic Gradient Descent using stratified sampling) \cite{Zhao14} can reach linear convergence rate but their iteration costs increase with batch size. SAG (Stochastic Average Gradient) \cite{Schmidt2017,roux2012stochastic}, SVRG (Stochastic Variation Reduction Gradient) \cite{johnson2013accelerating} and SAGA \cite{defazio2014saga} can achieve linear convergence theoretically, but may lose the merit practically when the training size is large enough. Therefore, it is quite essential to control gradient variance effectively while retaining low iterative costs, which is the key to improve the convergence rate of gradient methods.\\

In this paper, a general CVI (Convergence-Variance Inequality equation stated in Theorem \ref{th:1}) is presented for the first time to state formally the interaction of convergence rate and gradient variance. Then two techniques of stratified sampling and averaging over history are proposed to control gradient variance, resulting in a novel algorithm called SSAG (stochastic stratified average gradient). The significance of our approach is as follows. Firstly, the word stratified means SSAG uses stratified sampling method, instead of simple random sampling one as in SAG, SVRG and SAGA, to select a training example. The key insight behind this is to reduce harmful gradient variance in the first place by using better sampling method. In statistics, stratified sampling method has smaller design effect than simple randomly sampling one.\\

Secondly, averaging over history, commonly seen in literatures \cite{roux2012stochastic}, is adopted to store gradient values calculated at different iterations and compute the mean of them to guide the algorithm's search. Theoretical and experimental results show that SSAG achieves linear convergence rate that is independent of training data size $N$ while preserving low iteration costs and low storage requirements.\\

In summary, the main contributions of the paper are as follows.
\begin{itemize}
\item A CVI is presented, which states clearly the relationship between gradient variance and convergence.
\item A novel algorithm SSAG is introduced that is well-suited for training deep network in massive data sets because of its low iteration costs, low storage requirements and fast convergence.
\item Linear convergence rate and complexity of SSAG are proved. This convergence rate depends on the class number of supervised signal $Y$ in data $(X,Y)$, instead of data size $N$. Experimental results justify this assertion.
\end{itemize}

This paper is organized as follows. Section \ref{sec:2} introduces the SSAG algorithm, including SSAG's optimization object function, its iteration formulae and pseudo code for the implementation. Section \ref{sec:3} discusses some closely-related work in literatures. Section \ref{sec:4} and \ref{sec:5} give two main technical theorems. The details of our experiments are described in Section \ref{sec:6}. The interrelation and distinction between SSAG and other algorithms are further discussed in Section \ref{sec:7}. Finally, we conclude the paper in Section \ref{sec:8}. The proofs of two main theorems are presented in Appendix.
\section{SSAG algorithm}\label{sec:2}
Generally, given training data set $(X,Y)$, the object function to be optimized for SSAG is a finite sum of loss functions $J_i$ as follows.
\begin{equation}
\arg\min_{W\in R^p}J(W,b)=\frac{1}{N}\sum_{i=1}^{N}J(W,b;x^{(i)},y^{(i)})=\frac{1}{N}\sum_{i=1}^{N}J_{i}
\end{equation}
where $(W,b)$ are optimized parameters, $J_i=J(W,b;x_i,y_i)$ is the loss function on the $j^{th}$ sample in $(X,Y)$, the number $N$ is the size of the training data. If the optimized model is a three layers of neural network with $S_1$ input nodes, $S_2$ hidden nodes and $S_3$ output nodes, then $W\in R^{S_1\times S_2\times S_3}$.\\

Without loss of generality, we use $J(W)$ instead of $J(W,b)$ to denote object function for simplicity. In this paper, we focus on such cases where each $J_i$ is smooth and the average function $J$ is strongly-convex. An extensive list of convex loss functions used in deep learning is given in \cite{Teo2007Scalable}. For non-smooth loss functions, we can apply the approach adaptively by using smooth approximations.

SSAG uses iterations of the form:
\begin{equation}\label{eq:2}
W^{k+1}=W^{k}-h\bar{\bar{G}}^{k}=W^{k}-\frac{h}{C}\sum\limits_{j=1}^{C}\bar{G}_{j}^{k}
\end{equation}
where $\bar{\bar{G}}$ is the mean of $C$-dimensions vector $\bar{G}$. At each iteration a random index $j_{k}$ is selected and we set
\begin{equation}
\bar{G}_{j}^{k}=\left\{
\begin{aligned}
\bar{\phi}_{j}=\frac{1}{n_{j_{k}}}\sum\limits_{i=1}^{n_{j_{k}}}G_j(W_{k},\xi_{k,i})&\ if\ j=j_{k}\\
\bar{G}_{j}^{k-1}\qquad   \qquad \qquad \qquad  \quad                    &\ otherwise
\end{aligned}
\right.
\end{equation}

where each item $\bar{G_j}$ in $\bar{G}$ is the gradient mean of samples randomly selected from the $j^{th}$ class with batch size $n_k$.\\

The update direction of SSAG is determined by calculating the mean value of $\bar{G}$, which means it needs to maintain a $C$-dimensions vector $\bar{G}$ during iterations. At each iteration SSAG chooses one class of $j$ randomly, calculates a mini-batch gradient mean $\bar{G}_j$ of samples of the $j^{th}$ class, and then item $j$ in $\bar{G}$ is updated by the new $\bar{G}_j$ while the others of $\bar{G}$ remain unchanged. The proof of Theorem \ref{th:2} in Section \ref{sec:4} shows that batch size of SSAG has no effect on its convergence rate. This means SSAG still has linear convergence rate even when its batch size is set to $1$.\\

The implementation pseudo code of SSAG is described in algorithm \ref{alg:ssag}, where we use a variable Sum to track the quantity $\small \sum_{j=1}^{C}\bar{G}_j$.
\begin{algorithm}
\caption{Stochastic Stratified Average Gradient(SSAG) for minimizing $\frac{1}{N}\sum_{i=1}^{N}J_i(W)$ with step size h}
\label{alg:ssag}
\begin{algorithmic}[1]
\STATE Parameters:step size $h$, training data size $N$, the total number of class $C$, the $j^{th}$ class size $N_j$
\STATE Inputs:training data $\small{(x^{(1)},y^{(1)}),(x^{(2)},y^{(2)}),\cdots,(x^{(N)},y^{(N)})}$
\STATE set Sum=0, $\bar{G}_j=0, \bar{\phi}_j=0$ for $i=1,2,\cdots,C$
\FOR{$k=0,1,\cdots $ }
\STATE Sample j from \{$1,2,\cdots,C$\}
\STATE draw n samples from the $j^{th}$ class data
\STATE $\bar{\phi}_j=\frac{1}{n}\sum\limits_{i=1}^{n}J_i^{'}(W)$
\STATE $Sum=Sum-\bar{G}_j+\bar{\phi}_j$
\STATE $\bar{G}_j=\bar{\phi}_j$
\STATE $W=W-\frac{h}{C}\cdot Sum$
\ENDFOR
\end{algorithmic}
\end{algorithm}

Compared with SAG's requirement of storing a $N$-dimension vector , SSAG only needs to store a $C$-dimension vector ($C<<N$), this greatly decreases the amount of storage, especially in massive data set.\\

Later in this paper, we will further show SSAG's linear convergence rate is also dependent on the category number $C$, instead of the size $N$ of the training data.
\section{Related Work}\label{sec:3}
\subsection{FGD (Full Gradient Descent)}
SSAG belongs to the family of GD (Gradient Descent) algorithms \cite{Ruder2016An}. The first member of GD is FGD (Full Gradient Descent) which dates back to the work in \cite{Cauchy1847}. FGD uses iterations of the form
\begin{equation}\label{eq:4}
W^{k+1}=W^{k}-h_k\bar{G}_{k}
\end{equation}
where $\bar{G}_{k}=\nabla J_N(W^k)=\frac{1}{N}\sum_{i=1}^{N}G(W^k,\xi_{k,i})$ is average gradient over the whole training data set $(X,Y)$\\

Essentially, FGD chooses a steepest decline of the object function $J$ to move forward. Using $W^*$ to denote the unique minimizer of $J$, FGD with a constant step size achieves linear convergence rate
$J(W)-J(W^*)=\mathcal{O}(\rho^k)$ for some $\rho<1$ which depends on the condition number of $J$ \cite{nesterov2013introductory}.\\

Despite the fast convergence rate of FGD, it becomes unappealing when the data set size $N$ is large because its iteration cost scales linearly in $N$.
\subsection{SGD (Stochastic Gradient Descent)}
To address the above issue of FGD, SGD chooses one example from training set at each iteration. SGD uses the iterations of the form
\begin{equation}\label{eq:5}
W^{k+1}=W^{k}-h_kG_{\xi_{k,i}}
\end{equation}

The key idea behind SGD is to use sample gradient as an estimator of population gradient, so iteration cost of SGD is low and independent of the data size $N$. However SGD achieves only sub-linear convergence rate practically due to the existence of gradient variance.p
\subsection{Mini-batch SGD}
To reduce the gradient variance harmful to convergence, a natural and straightforward idea is to increase the sample size. Following this idea, mini-batch SGD uses iterations of the form
\begin{equation}\label{eq:6}
W^{k+1}=W^{k}-h_k\bar{G}_{\xi_{k}}
\end{equation}
where $\bar{G}_{\xi_{k}}\!=\!\nabla J_{n_k}(W^k)\!=\!\frac{1}{n_k}\sum_i^{n_k}G(W^k,\xi_{k,i}) $ is average gradient over samples.\\

Mini-batch SGD uses the mean of sample gradients as its guiding direction, so it can achieve linear convergence rate when the sample size increases. However, in terms of the passes of data, mini-batch SGD's faster convergence rate may be offset by the higher iteration cost associated with using mini-batches, as pointed out by Mark Schmidt in \cite{Schmidt2017}.\\

If we define a sampling function $\xi_k:N\rightarrow n_k$, formulae \ref{eq:4} and \ref{eq:5} fall into the framework of formulae \ref{eq:6} when sample size $n_k$ is equal to $N$ and $1$, respectively. From this perspective, both FGD and SGD are special cases of mini-batch SGD.\\

In the next section, starting from unified formulae \ref{eq:6}, we will derive an inequality equation named CVI stated in Theorem \ref{th:1}, it is first attempt to clarify how gradient variance influences convergence rate exactly.
\subsection{SAG (Stochastic Average Gradient)}
Different from mini-batch SGD's averaging over samples at inner iteration, SAG \cite{Schmidt2017,roux2012stochastic} averages gradients between iterations. At each iteration, SAG randomly selects a sample, calculates the gradient of the sample and stores or updates the corresponding item in a $N$-dimension vector, so each item in this vector is calculated at different iteration. The mean value of the vector is parameters' update direction of SAG. SAG uses iterations of the form
\begin{equation}
W^{k+1}\!=\!W^{k}\!-\!\frac{h}{N}\sum\limits_{j=1}^{C}\phi_{j}^{k}
\end{equation}
where
\begin{equation}
\phi_{j}^{k}=\left\{
\begin{aligned}
G_j(W_{k})&\ if\ j=j_k\\
\phi_{j}^{k-1}\quad &\ otherwise
\end{aligned}
\right.
\end{equation}

SAG converges in the rate of $\mathcal{O}((1-\frac{\mu}{8NL})^k)$, which depends on data size $N$. In the case of $N$ approaching infinity, SAG loses the advantage of fast linear convergence. Another limit of SAG is that it needs to maintain a $N$-dimension vector for keeping track of gradient information to be calculated at different iterations, the storage requirement is very huge in massive data setting.
\subsection{SVRG (Stochastic Variation Reduction Gradient) \cite{johnson2013accelerating}}
SVRG uses iterations of the form
\begin{equation}
W^{k+1}\!=\!W^{k}\!-\!h(G_i(W^{k})\!-\!G_i(\bar{W}^{k})\!+\!\frac{1}{N}\sum\limits_{j=1}^NG_j(\bar{W}^{k}))
\end{equation}

SVRG is a double loop algorithm. It applies subtracting idea to decrease gradient variance. In the outer loop, SVRG computes and records a full gradient of a referenced network $\bar{W}$. In the inner loop, SVRG calculates a gradient difference between current network $W$ and referenced network $\bar{W}$ on a same randomly sample. Finally, SVRG obtains an unbiased estimator as its guiding direction by adding the difference to the full gradient pre-computed in its outer loop.\\

The idea of SVRG is effective and SVRG can achieve linear convergence rate of
\begin{equation}
\mathcal{O}((\frac{1}{h\mu(1\!-\!2L\mu)m}\!+\!\frac{2L\mu}{1\!-\!2L\mu})^k)\nonumber
\end{equation}
where $L$ and $\mu$ are continuous and strongly convex parameters, respectively.\\

This result is independent of data size $N$. However, SVRG needs to store two networks $W$ and $\bar{W}$, and calculate gradient twice for each selected sample.
\subsection{SAGA \cite{defazio2014saga}}
Inspired both from SAG and SVRG, SAGA adds an additional operator called \emph{prox} to find a solution which satisfies sparseness of the given measure. SAGA iterates as follows:
\begin{equation}\label{eq:10}
W^{k+1}\!=\!prox_h^{\ell}(W^{k}\!-\!h(G_i(W^{k})\!-\!\phi_{i}^{k})\!+\!\frac{1}{N}\sum\limits_{i=1}^N\phi_{i}^{k}))
\end{equation}
where $$prox_h^{\ell}(W)\!=\!\arg\min_{W_x}\{\ell(W_x)\!+\!\frac{1}{2h}\!\parallel\! W_x\!-\!W \!\parallel^2\!\}$$.\\
Essentially SAGA is at the midpoint between SAG and SVRG: it update the $\phi_j$ value each time index $j$ is picked, whereas SVRG updates all of $\phi$ as a batch. Similar to SAG, SAGA can also achieve linear convergence rate of $ \mathcal{O}((1\!-\!\frac{\mu}{2(\mu N\!+\!L)})^k)$. This result depends on the data size $N$. In the case of $N$ approaching infinity, SAGA also loses the merit of linear convergence.
\section{General convergent result of gradient descent}\label{sec:4}
Before analysing the convergence rate of SSAG, we firstly present a general convergent result of GDM (Gradient Descent Methods) in this section, where GDM refers to FGD, SGD and mini-batch SGD. From this general result, we can see how gradient variance impacts convergence rate of an  algorithm.\\

To build the general convergent result we need the following assumptions.
\begin{itemize}
\item $A1)$ Cost function $J(W)$ is first order Lipschitz continuous, i.e.,\begin{equation}\label{eq:11}
\!\parallel\! \nabla J(W)\!-\!\nabla J(W')\!\parallel\! \leq\! L\!\parallel\! W\!-\!W'\!\parallel\!
\end{equation}
\item $A2)$ Cost function $J(W)$ is strongly convex, i.e.,\begin{equation}\label{eq:12}
J(W)\!\geq\! J(W')\!+\!\nabla J(W')(W-W')\!+\!\frac{1}{2}\mu\!\parallel\! W-W'\!\parallel^2\!
\end{equation}
\end{itemize}
The following Theorem \ref{th:1} formalizes the relationship between gradient variance and convergence rate, that is, with more smaller gradient variance, GDM approaches closer to the optimal solution, and if gradient variance is reduced to zero, GDM can achieve linear convergence rate.
\begin{MyThe}[CVI: Convergence-Variance Inequality equation]\label{th:1}
If assumptions $A1)$ and $A2)$ hold, then under the condition of step size $h_k\!<\!\frac{2}{L}$, there exists $\rho<1$ such that
\begin{equation}
E[J(W^{k+1})-J^*]\!\leq\! \Lambda^k\!+\!\rho^k(E[J(W^1)-J^*]-\Lambda)\nonumber
\end{equation}
, where $J^*$ is the optimal value, $\Lambda\!=\!\frac{h_kL(1-f)\sigma_k^2}{2\mu(2-h_k\mu)n}$, $\sigma_k^2$ is the gradient variance on population at the $k^{th}$ iteration.
\end{MyThe}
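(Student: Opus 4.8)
The plan is to derive a single-step contraction for the expected suboptimality $a_k := E[J(W^k) - J^*]$ of the form $a_{k+1} \le \rho\,a_k + B$, where $B$ collects the irreducible variance term, and then to telescope this linear recursion into the stated closed form. The three analytic ingredients are the descent lemma coming from $A1)$, the unbiasedness and variance of the mini-batch gradient estimator in \eqref{eq:6}, and the strong-convexity (Polyak--\L ojasiewicz) bound coming from $A2)$.

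First I would apply the descent lemma implied by $A1)$ to the update $W^{k+1} = W^k - h_k \bar G_{\xi_k}$:
$$J(W^{k+1}) \le J(W^k) - h_k \langle \nabla J(W^k), \bar G_{\xi_k}\rangle + \frac{L h_k^2}{2}\,\|\bar G_{\xi_k}\|^2.$$
Taking the expectation conditioned on $W^k$ and using that the mini-batch average is an unbiased estimator, $E[\bar G_{\xi_k}\mid W^k] = \nabla J(W^k)$, the cross term becomes $-h_k\|\nabla J(W^k)\|^2$. For the quadratic term I would use the bias--variance split $E[\|\bar G_{\xi_k}\|^2\mid W^k] = \|\nabla J(W^k)\|^2 + \mathrm{Var}(\bar G_{\xi_k})$ and identify the variance of the size-$n$ sample mean with the finite-population-corrected quantity $(1-f)\sigma_k^2/n$, where $f$ is the sampling fraction and $\sigma_k^2$ the population gradient variance at step $k$. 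This produces
$$E[J(W^{k+1})\mid W^k] \le J(W^k) - h_k\!\left(1 - \tfrac{L h_k}{2}\right)\|\nabla J(W^k)\|^2 + \frac{L h_k^2 (1-f)\sigma_k^2}{2n}.$$
Under $h_k < 2/L$ the coefficient $1 - L h_k/2$ is positive, so the gradient term is genuinely a decrease.

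Next I would convert the gradient norm into a suboptimality gap. Minimizing the strong-convexity inequality $A2)$ over its free argument yields the Polyak--\L ojasiewicz bound $\|\nabla J(W^k)\|^2 \ge 2\mu\,(J(W^k) - J^*)$. Substituting this lower bound (legitimate because its coefficient is negative) and subtracting $J^*$, then taking total expectation, gives the advertised recursion $a_{k+1} \le \rho\,a_k + B$ with a contraction factor $\rho$ assembled from $h_k,\mu,L$ and $B = \tfrac{L h_k^2 (1-f)\sigma_k^2}{2n}$; the hypothesis $h_k < 2/L$ is exactly what forces $\rho < 1$. Finally, writing $\Lambda := B/(1-\rho)$ for the fixed point of the recursion, the inequality rearranges to $a_{k+1} - \Lambda \le \rho\,(a_k - \Lambda)$, and iterating from $k = 1$ delivers $a_{k+1} \le \Lambda + \rho^k(a_1 - \Lambda)$, which is the claim (reading the exponent on the leading $\Lambda$ as absent, i.e. $\Lambda$ is the noise floor to which the iterates contract geometrically).

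I expect the delicate points to be two. The first is the variance bookkeeping: justifying the exact $(1-f)\sigma_k^2/n$ form for the sample mean, and keeping $\sigma_k^2$ honestly indexed by the iterate so that $\Lambda$ is a genuine per-step noise floor rather than a uniform constant. The second is pinning down the precise contraction factor so that $\Lambda$ collapses to the stated $\tfrac{h_k L(1-f)\sigma_k^2}{2\mu(2 - h_k\mu)n}$; this requires choosing the strong-convexity/Lipschitz bound that makes $1-\rho = \mu h_k(2 - \mu h_k)$, after which the closed form falls out of $\Lambda = B/(1-\rho)$. Establishing $0 < \rho$, and not merely $\rho < 1$, would also be needed for the telescoped bound to represent true convergence.
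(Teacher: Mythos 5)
Your proposal is correct and follows essentially the same route as the paper's proof: the descent lemma from $A1)$, unbiasedness plus the finite-population variance $\frac{(1-f_k)\sigma_k^2}{n_k}$ of the sample-mean gradient, the Polyak--\L{}ojasiewicz bound $\|\nabla J(W^k)\|^2 \ge 2\mu(J(W^k)-J^*)$ from $A2)$, a one-step recursion $a_{k+1}\le \rho\,a_k + B$, and telescoping around the fixed point $\Lambda = B/(1-\rho)$. The delicate point you flag about the constant is real but lies in the paper, not your argument: the recursion the paper itself derives has $\rho = 1-2h_k\mu+h_k^2\mu L$, hence $1-\rho = h_k\mu(2-h_kL)$ and fixed point $\frac{h_kL(1-f)\sigma_k^2}{2\mu(2-h_kL)n}$, so the $(2-h_k\mu)$ appearing in the stated $\Lambda$ (and likewise the constancy of $\Lambda$ across iterations despite $\sigma_k^2$ varying, which your ``honestly indexed'' caveat identifies) is an internal inconsistency of the paper's own write-up.
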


CVI theorem is a general result of FGD, SGD and mini-batch SGD. In the case of FGD, the sample size $n$ is equal to data size (population size) $N$, the sampling ratio $f$ equals $1$, so $\Lambda=0$. This leads to linear convergence rate of FGD. In the case of SGD, the sample size $n$ is equal to one, the number $\Lambda$ ceases to decay, so SGD cannot achieve linear convergence rate. As for mini-batch SGD, the sample size $n$ is a random number between $1$ to $N$, the number $\Lambda$ can be reduced and infinitely close to zero due to $n$ being in the position of denominator in $\Lambda$. So mini-batch SGD can converge to optimal solution as well.
\section{Convergence and Complexity Analysis of SSAG}\label{sec:5}
The conclusion in theorem \ref{th:1} reveals that the most important is to find out effectively way to reduce gradient variation when designing novel algorithm. SSAG uses two techniques, averaging over history and stratified sampling, to control gradient variance.
\subsection{Convergence of SSAG}
The following theorem states that SSAG can converge in linear rate while retaining low iteration costs as that of SGD.
\begin{MyThe}\label{th:2}
Given assumption $A1)$ and $A2)$, with a constant step size of $h\!=\!\frac{1}{2CL}$, the SSAG iterations satisfy for $k\!>\!1$:
\begin{small}
$$E\!\parallel\! W^{k+1}\!-\!W^* \!\parallel^2\!\leq\!(1\!-\!\frac{\mu}{8CL})^k(\frac{9(1-f)}{4n}\sum\limits_c\sigma_c^2(W^*)+3\!\parallel\! W^0\!-\!W^*\!\parallel^2\!)$$
\end{small}
where $C$ is category number, $\sigma_c^2(W^*)$ is gradient variance of optimal network $W^*$ with respect to samples, $n$ is sample capacity of category $c$, $f$ is ratio of sample to population.
\end{MyThe}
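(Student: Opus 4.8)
The plan is to recognise SSAG as an instance of the SAG scheme in which the $N$ individual training examples are replaced by $C$ aggregated ``super-examples,'' the $c$-th of which supplies the stratified mini-batch gradient mean $\bar{G}_c$. I would therefore adapt the Lyapunov-function machinery used to analyse SAG (and SAGA) to this stratified setting. The one genuinely new ingredient is that each stored coordinate $\bar{G}_c^k$ is itself a \emph{sampled} estimate of the true class gradient $\nabla J_c$, so its deviation is governed by the finite-population sampling variance rather than being exactly zero; this is precisely the channel through which the within-class quantities $\sigma_c^2(W^*)$ and the sampling-ratio correction $1-f$ enter the bound.

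First I would write the update as $W^{k+1}=W^k-h\,d^k$ with search direction $d^k=\frac{1}{C}\sum_{c=1}^C \bar{G}_c^k$, and expand
\[ \|W^{k+1}-W^*\|^2=\|W^k-W^*\|^2-2h\langle d^k,\,W^k-W^*\rangle+h^2\|d^k\|^2. \]
Taking the conditional expectation given the history, the random class index $j_k$ is uniform on $\{1,\dots,C\}$, so in expectation the freshly refreshed coordinate contributes the true class gradient at $W^k$ while the stale coordinates contribute gradients frozen at earlier iterates. I would split $d^k$ into its conditional mean (the descent-driving part) and a zero-mean noise part, then control the three terms: strong convexity (assumption $A2$) lower-bounds the inner-product term by a $\mu\|W^k-W^*\|^2$-type quantity, while $L$-smoothness (assumption $A1$) upper-bounds $\|d^k\|^2$. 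The within-class mean $\bar{\phi}_c$ averages $n$ draws from the class population, so its conditional variance around $\nabla J_c(W^*)$ is exactly the finite-population-corrected $\frac{1-f}{n}\sigma_c^2(W^*)$; summing over classes produces the $\frac{9(1-f)}{4n}\sum_c\sigma_c^2(W^*)$ contribution and makes transparent the abstract's claim that only between-class structure, not within-class variance, affects the rate. This step mirrors the role of $\Lambda$ in the CVI of Theorem~\ref{th:1}, now applied stratum by stratum.

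The hard part will be the stale, asynchronously updated coordinates $\bar{G}_c^{k-1}$, since $d^k$ is a \emph{biased} estimate of $\nabla J(W^k)$ whenever the memory is out of date. Following the SAG analysis I would introduce a potential function of the form $\Phi^k=\|W^k-W^*\|^2+\frac{\beta}{C}\sum_{c}\|\bar{G}_c^k-\nabla J_c(W^*)\|^2$ with a constant $\beta>0$ to be tuned, and show one iteration contracts it by the factor $1-\frac{\mu}{8CL}$; the appearance of $\frac{1}{8CL}$ is exactly what the choice $h=\frac{1}{2CL}$ yields after balancing the descent gain against the error injected by refreshing a single coordinate at a time. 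The main labour is selecting $\beta$ together with $h$ so that the cross terms from this one-coordinate refresh are absorbed into the contraction — the same delicate quadratic-inequality balancing that made the original SAG argument technically heavy, though the $C$-fold aggregation lightens the bookkeeping considerably. Finally I would unroll the contraction from $k=0$ and bound the initial potential $\Phi^0$ by the stated constant, with the initialisation $\bar{G}_c^0=0$ together with the term already present supplying the $3\|W^0-W^*\|^2$ factor and the accumulated sampling noise supplying the variance term, thereby recovering the displayed inequality with its explicit constants $\tfrac{9}{4}$ and $3$.
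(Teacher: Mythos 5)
Your overall strategy is indeed the paper's: view the $C$ class-slots as SAG memory, analyze the joint state $\theta^k=(\bar G^k, W^k)$, prove a one-step contraction at rate $1-\frac{\mu}{8CL}$ under $h=\frac{1}{2CL}$, unroll, and bound the initial potential. But there is a concrete gap at the decisive point: the potential you propose, $\Phi^k=\parallel W^k-W^*\parallel^2+\frac{\beta}{C}\sum_c\parallel\bar G_c^k-\nabla J_c(W^*)\parallel^2$, is a \emph{diagonal} quadratic in $(\bar G^k,W^k)$, whereas the paper's Lyapunov function (inherited from Le Roux et al.\ via Lemma \ref{lem:1}) is a full quadratic $(\theta^k-\theta^*)^T M(\theta^k-\theta^*)$ with $A=3h^2CI+\frac{h^2}{C}(\frac1C-2)ee^T$, $b=-h(1-\frac1C)e$, $\nu=I$. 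The off-diagonal block $b$ is not a convenience: because the SSAG/SAG direction is biased, the conditional expectation of one step produces cross terms of the form $(\bar G^k-\bar G(W^*))^T e\,(W^k-W^*)$ (the stale-memory contributions visible in formulae \ref{eq:20}--\ref{eq:23}), and these are cancelled only by completing the square against the negative definite matrix $R$ that the cross term $b$ and the $ee^T$ part of $A$ create. A diagonal $\Phi^k$ gives you nothing to pair these cross terms against, and no choice of $\beta$ fixes this — which is why the original SAG proof (and this paper's adaptation of it) had to search over full quadratic forms rather than diagonal ones. Your plan would need to be enlarged to that richer family before the ``delicate balancing'' you anticipate can even begin.

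A second, smaller discrepancy concerns where the sampling variance enters. You propose to inject the finite-population variance $\frac{1-f}{n}\sigma_c^2$ at \emph{every} iteration, mirroring the role of $\Lambda$ in Theorem \ref{th:1}; carried through, that would produce an additive, non-decaying noise floor, not the purely geometric bound $(1-\frac{\mu}{8CL})^k(\cdot)$ claimed in Theorem \ref{th:2}. In the paper's argument the within-class variance appears \emph{only} through the initial value: with $\bar\phi^0_i=0$ and $h=\frac{1}{2CL}$ one computes $\ell(\theta^0)=\frac{3(1-f)}{4n}\sum_c\sigma_c^2(W^*)+\parallel W^0-W^*\parallel^2$, while the per-step inequality $E(\ell(\theta^{k+1})|\Gamma_k)\leq(1-\delta)\ell(\theta^k)$ carries no additive term. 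Relatedly, your attribution of the constant $3$ to the initialization is off: it comes from the domination step, where the Schur complement condition on $M'=M-\mathrm{diag}(0,dI)$ shows $\ell(\theta)\geq\frac13\parallel W-W^*\parallel^2$ (i.e.\ $d=\frac13$), and multiplying $\ell(\theta^0)$ by $3$ yields both the $\frac{9(1-f)}{4n}$ and the $3\parallel W^0-W^*\parallel^2$ in the statement. With your diagonal $\Phi^k$, whose $W$-coefficient is $1$, this domination step would be vacuous and the stated constants would not emerge.
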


One interesting result of Theorem \ref{th:2} is that the convergence rate $(1-\frac{\mu}{8CL})^k$ of SSAG is independent of mini-batch size $n$ used in stratified sampling, which can be seen from the proof of theorem \ref{th:2}, in order to get the inequality equation in formulae \ref{eq:25}, we shrink the batch size $n$ to unity in final bound \ref{eq:24}. This means SSAG still remains linear convergence rate even when its batch size is unity. This theoretical result can be verified by the experimental evidence later (two curves of different batch-size are nearly coincident in figure \ref{fig:EofB3}),About which a reasonable explanation is that the variance between classes, instead of within classes, is the main factor affecting SSAG's convergence rate, and the variance within classes, together with the batch size $n$ only appear on the term ($\ell(\theta^0)$) and has no effect on the convergence rate $(1-\frac{\mu}{8CL})^k$ of SSAG.\\

From Theorem \ref{th:2} we can also see that the category number $C$ is a key factor of SSAG's convergent rate. People may argue the plausibility of SSAG's convergence rate $\mathcal {O}((1-\frac{\mu}{8CL})^k)$. Many of them deem it is unreasonable that SSAG can converge faster when category number $C$ decreases. In fact, classification problems with large category number are more complex than those with small category number. So SSAG can converge faster if the category number is smaller. The best convergence rate of SSAG is the case when the category number is $2$.\\

People may also argue the possibility of SSAG's linear convergence rate without using full gradients, especially when data size $N$ tends to infinity. A reasonable explanation for this is that, when the category number $C$ is fixed, the redundant degree of data is increasing with data size $N$. For the highly redundant data, random samples can approximate full data with arbitrary precision if the sample capacity $n$ is large enough but relatively small.\\

It is worth mentioning that,when deep neural network working in unsupervised learning mode, training data is $\ (X,X)\ $, instead of $(X,Y)$, in this case, $(C=N)$, the convergence rate of SSAG is equal to that of SAG and both of them lose linear convergence rate when N tend to infinity.
\subsection{Complexity of SSAG}
\begin{MyCor}
\begin{equation}
k\!\leq\! (ln\varepsilon\!-\!ln(\frac{9(1-f)}{4n}\sum\limits_c\sigma^2_c(W^*)))/ln(1\!-\!\mu/(8CL))\nonumber
\end{equation}
\end{MyCor}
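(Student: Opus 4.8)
The plan is to treat this corollary as the iteration-complexity reading of Theorem \ref{th:2}: given a target tolerance $\varepsilon>0$, I determine how large $k$ must be for the geometric error bound to reach $\varepsilon$, and then simply invert that bound. Abbreviating $\rho=1-\frac{\mu}{8CL}$ and letting $M=\frac{9(1-f)}{4n}\sum_c\sigma_c^2(W^*)+3\|W^0-W^*\|^2$ be the bracketed constant, Theorem \ref{th:2} states $E\|W^{k+1}-W^*\|^2\le\rho^k M$. Hence the single inequality $\rho^k M\le\varepsilon$ is a sufficient condition for $\varepsilon$-accuracy, and the whole corollary reduces to solving it for $k$.

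First I would isolate the exponential factor as $\rho^k\le\varepsilon/M$ and take natural logarithms of both positive sides, which preserves the inequality and gives $k\ln\rho\le\ln\varepsilon-\ln M$. The second step is to divide through by $\ln\rho$. Here is the one delicate point: the step size $h=\frac{1}{2CL}$ together with $C\ge2$ forces $0<\frac{\mu}{8CL}<1$, so $0<\rho<1$ and $\ln\rho<0$; dividing by a negative number reverses the inequality. Carrying this out isolates $k$ against the threshold $(\ln\varepsilon-\ln M)/\ln(1-\mu/(8CL))$, which is positive whenever $\varepsilon<M$ since numerator and denominator are then both negative. Substituting $\rho$ back and retaining the dominant variance part of $M$ inside the logarithm reproduces the displayed expression.

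The computation is otherwise routine; the genuine obstacle, and the crux of stating the corollary correctly, is bookkeeping the sign of $\ln\rho$, since it is exactly this negative sign that fixes the direction of the final inequality relating $k$ to the threshold. I would therefore make $\ln(1-\mu/(8CL))<0$ explicit and attribute it precisely to the step-size choice and to $C\ge2$ from Theorem \ref{th:2}. As a sanity check I would also record the asymptotic form: using $\ln(1-x)\approx-x$ for small $x$ gives threshold $\approx\frac{8CL}{\mu}\ln(M/\varepsilon)$, exhibiting the advertised $\mathcal{O}(C)$ dependence on the category number and confirming that the cost to reach $\varepsilon$ is governed by $C$ rather than by the data size $N$.
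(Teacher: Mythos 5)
Your proposal is correct and takes essentially the same route as the paper's own proof: invoke Theorem \ref{th:2}, impose $(1-\frac{\mu}{8CL})^k\bigl(\frac{9(1-f)}{4n}\sum_c\sigma^2_c(W^*)+3\parallel W^0-W^*\parallel^2\bigr)\le\varepsilon$, take logarithms, and solve for $k$, retaining only the variance term inside the logarithm exactly as the paper's final statement does. Your explicit bookkeeping of the sign $\ln(1-\mu/(8CL))<0$ and the resulting inequality reversal merely makes precise the step the paper compresses into ``easily we can conclude the proof.''
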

\begin{proof}
According to theorem \ref{th:2}, we have
\begin{equation}
\begin{array}{l}
E\!\parallel\! W^{k+1}\!-\!W^* \!\parallel^2\!\\
\!\leq\!(1\!-\!\frac{\mu}{8CL})^k(\frac{9(1-f)}{4n}\sum\limits_c\sigma^2_c(W^*)\!+\!3\!\parallel\! W^0\!-\!W^* \!\parallel^2\!)\\
\!\leq\! \varepsilon \nonumber
\end{array}
\end{equation}
This leads to
\begin{equation}
\begin{array}{l}
ln\{(1\!-\!\frac{\mu}{8CL})^k(\frac{9(1-f)}{4n}\sum\limits_c\sigma^2_c(W^*)\!+\!3\!\parallel\! W^0\!-\!W^* \!\parallel^2\!)\}\\
\!\leq\! ln\varepsilon \nonumber
\end{array}
\end{equation}
Easily we can conclude the proof.
\end{proof}
\section{Experiment results}\label{sec:6}
In this section we carry out empirical evaluations for the SSAG iterations on the platform of the deep learning system. The adopted data set is the MNIST database of handwritten digits, which contains $60,000$ training examples and $10,000$ test examples. We first compare the convergence of the implementation of SSAG iterations with the SAG one and the SGD one. We then proceed to evaluate the effect of the different algorithmic configurations such as the step size, mini-batches and network's depth.
\subsection{Comparison with SAG and SGD Ones}
To illustrate SSAG's performance, we run the algorithm, together with the SAG and the SGD ones on a three layers network with $1024$ input nodes, $120$ hidden nodes and $10$ output nodes. At each pass, $6000$ training samples are uniformly and randomly drew from $60,000$ handwritten pictures with a constant sampling ratio of $0.1$. After $300$ epoches, $10,000$ handwritten pictures in the test set are fed to the trained networks. We record the test accuracy of the networks which are trained by SSAG, SAG and SGD with different step-size. Data are collected in Table \ref{tab:Accr}.\\

From Table \ref{tab:Accr}, SAG's average accuracy is $94.7\%$ which is higher than $69.28\%$ of SAG and $94.01\%$ of SGD. Also, SSAG's accuracy in different step-size is more stable than those of SAG and SGD, and its standard deviation is only $0.96$, smaller than $39.24$ of SAG and $1.96$ of SGD.\\
\begin{table}[htbp]
  \centering
  \caption{The Accuracy of SSAG,SAG and SGD with different step-size}
    \begin{tabular}{|c|c|c|c|c|c|c|}
          & \multicolumn{2}{c|}{SSAG} & \multicolumn{2}{c|}{SAG} & \multicolumn{2}{c|}{SGD} \\
    \hline
          & Step-size & Accu(\%) & Step-size & Accu(\%) & Step-size & Accu(\%) \\
          \hline
          & 0.2   & 95.1 & 0.2   & 9.58  & 0.2   & 90.52 \\
          \hline
          & 0.1   & 94.83  & 0.1   & 28.9  & 0.1   & 95.09 \\
          \hline
          & 0.01  & 93.7  & 0.01  & 94.99 & 0.01  & 94.66 \\
          \hline
          & 0.05  & 93.39 & 0.005 & 94.51 & 0.02  & 94.8 \\
          \hline
          & 0.1   & 95.37 & 0.0025 & 93.61 & 0.005 & 94.98 \\
          \hline
          & 0.2   & 95.83 & 0.02  & 94.09 &       &  \\
    \hline
    avg   &       & 94.70 &       & 69.28 &       & 94.01 \\
    \hline
    std   &       & 0.96 &       & 39.24 &       & 1.96 \\
    \hline
    \end{tabular}%
  \label{tab:Accr}%
\end{table}%

We plot the results of the different methods for about $300$ effective passes through the data. In Figure \ref{fig:PD}, we can observe the following patterns:
 \begin{itemize}
\item SGD vs.SAG: For a given step-size ($h=0.1$), the SGD can reduce training error sharply. However, after a certain iterations (about $34$ iterations here), the SGD cannot reduce training errors further, its error almost remains at a same level. In contrast, SAG can substantially reduce the error further even after $34$ iterations, this phenomenon can be explained by its variance shrinking effect, the gradient variance of SAG decays to zero after sufficient large number of iterations.
\item (SGD and SAG) vs. SSAG: The SSAG iterations seem to achieve the best among the three. It starts substantially better and keeps that constantly than SGD and SAG methods.
\end{itemize}
\begin{figure}[!htbp]
\centering
\includegraphics[width=0.62\textwidth]{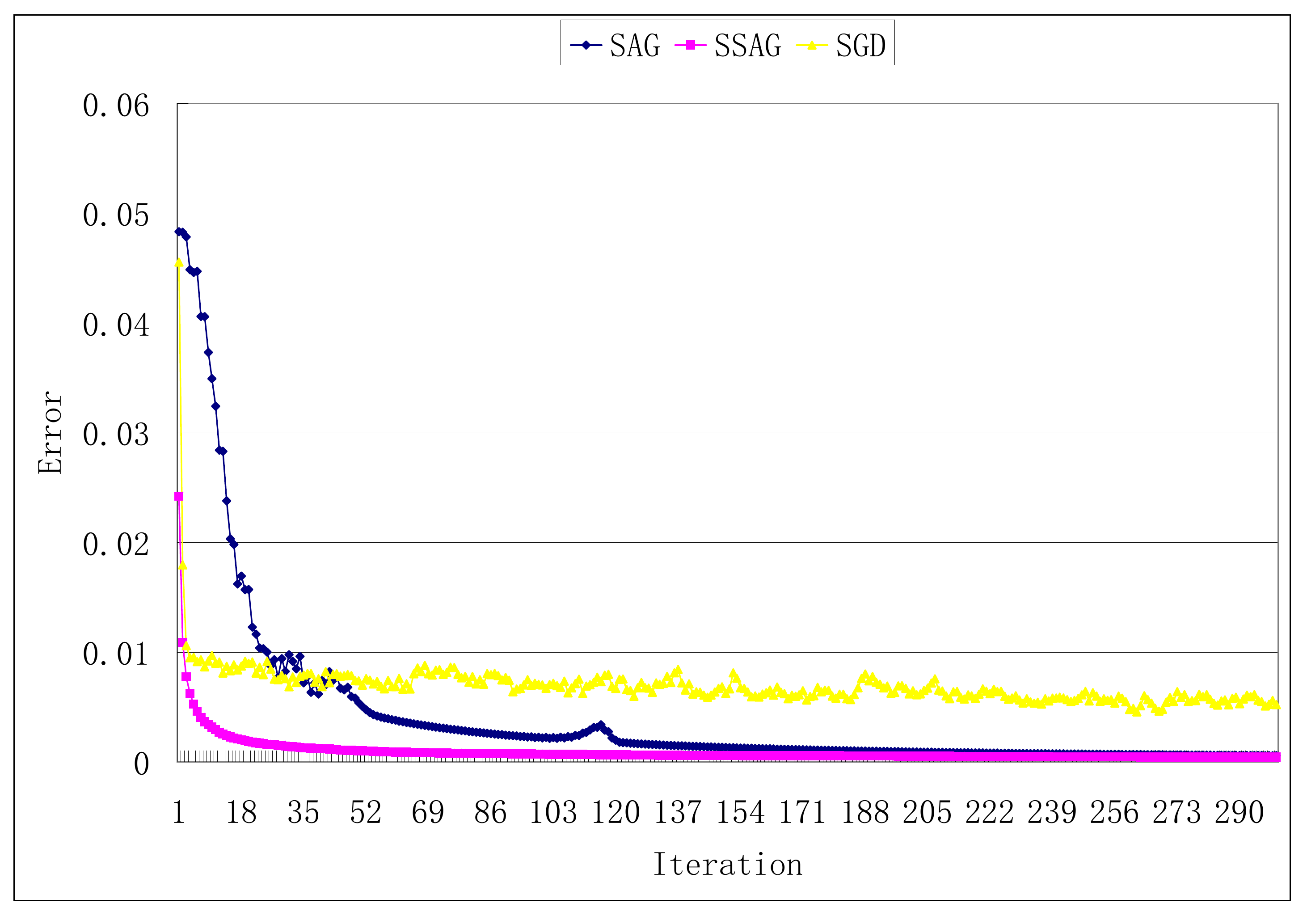}
\caption{Performance difference between SSAG,SAG and SGD}\label{fig:PD}
\end{figure}
\subsection{The Effect of step-size}
To see the impact of step-size on the performance of SSAG, we plot the performance curve of SSAG with different step size.\\

From the curve, we can see that SSAG favors a large step size, it performs best when step size is set to $0.1$ in our experiments (Figure \ref{fig:step-size}). Small step size will slow down the learning process. The reason is that the optimization direction determined by SSAG is more accurate than the others. So relatively large step size is acceptable and will not lead to a bad region of the solution space.
\begin{figure}[!htbp]
\centering
\includegraphics[width=0.62\textwidth]{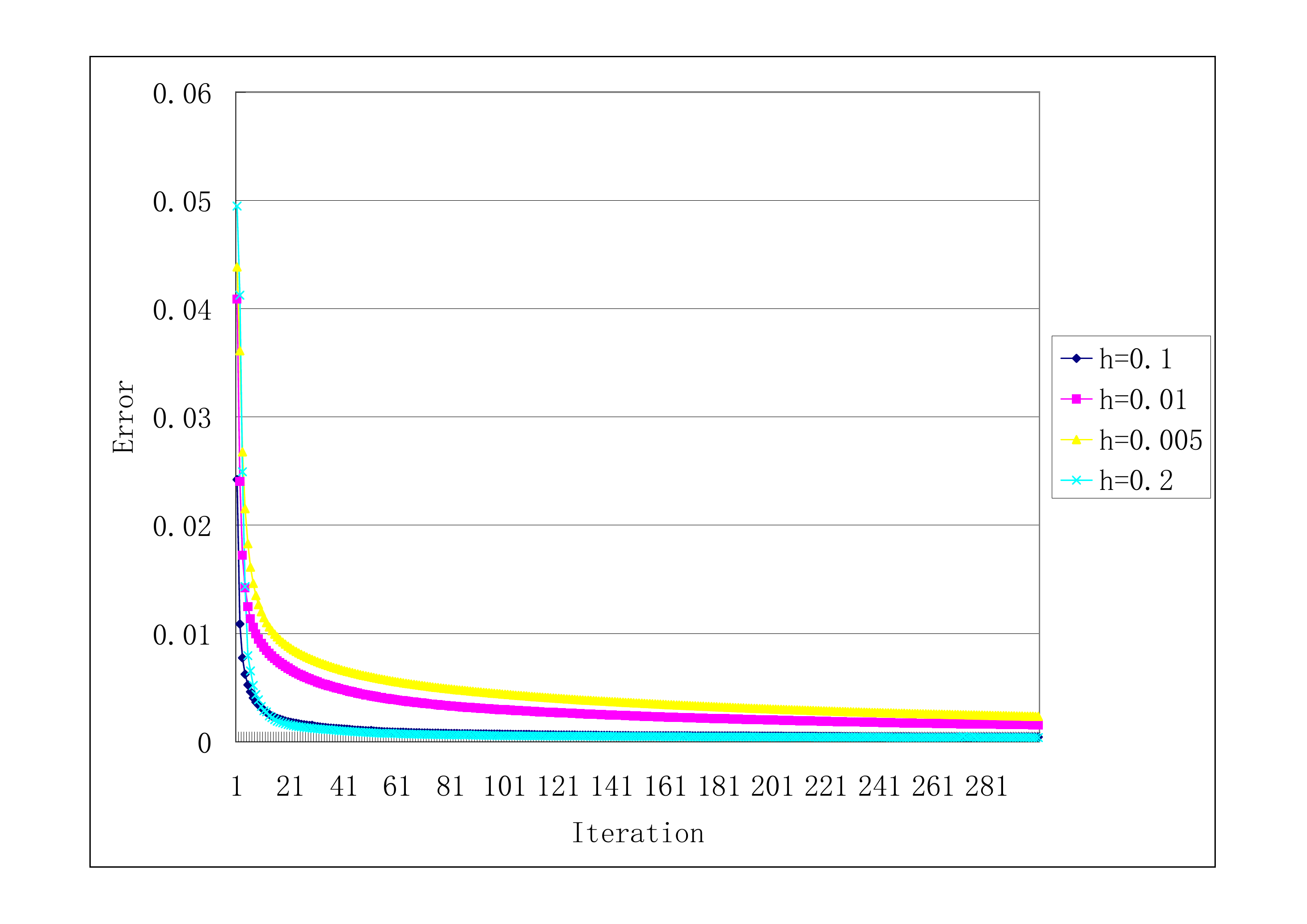}
\caption{The performance curve of SSAG with different step size(batch size n=1)}\label{fig:step-size}
\end{figure}
\subsection{The Effect of mini-batch}
The theoretical analysis before asserts that the convergence rate of SSAG is independent of mini-batch size $n$ used in stratified samples, this assertion seems counterintuitive. However it can be justified by the experimentation. By running SSAG on a three layers' network with the same step size ($h=0.1$, MNIST dataset), we test the performance of SSAG deployed on different batch size, and compare the test error curves of SSAG by varying batch size from $1$ to $10$, $20$, $30$, $50$ and $70$. The experimental results are plotted on Figure \ref{fig:EofB1}. We can see that all of the error curves in Figure \ref{fig:EofB1} drop fast, which means SSAG remains its linear convergence rate no matter what the batch size is. Also we can see that SSAG converges fastest when the batch size is unity. This result means SSAG cannot benefit too much from increasing batch size. The reason behind this is that the convergence rate of SSAG is mainly determined by the variance between classes, while the variance within class has little effect on the convergence rate.\\

In addition, another noteworthy phenomenon reflected in Figure \ref{fig:EofB1} is that there is a big drop in the pink line when batch size is $20$. This can be explained by the cliff structure in the object function of the optimized network. Neural networks with many layers may have extremely steep regions resembling cliffs, SSAG is more easily to get close a cliff region when the batch size of SSAG is set to $20$.
\begin{figure}[!htbp]
\centering
\includegraphics[width=0.62\textwidth]{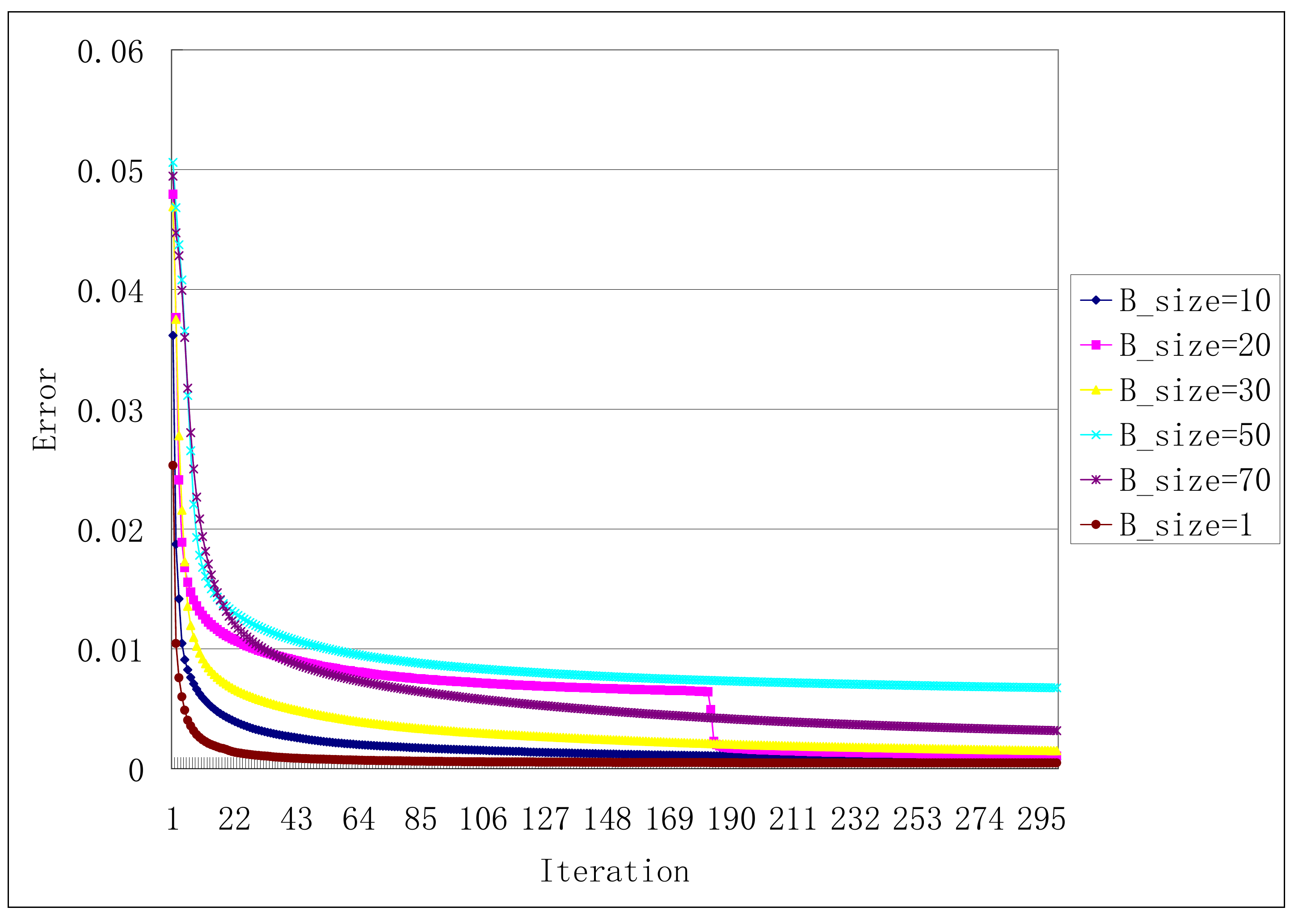}
\caption{The effect of batch size, $B\_size=10$ means parameter $n$ in SSAG(algorithm \ref{alg:ssag}) is set to $10$, ten samples are uniformly randomly drawn from a randomly chosen class, and so on. The best one is the case of $B\_size=1$.}\label{fig:EofB1}
\end{figure}
Also, our experiments show that SSAG has different optimal step-size for a given batch size, and the optimal step-size will increase with batch size. In Figure \ref{fig:EofB2} we compare SSAG's performance on different step-size when batch size is fixed ($n=10$). From Figure \ref{fig:EofB2} the optimal step-size is $4$ when batch size $n$ is equal to $10$.\\

The reason behind it is that the gradient variances within class have impacts on step-size, a larger batch size suggests a smaller gradient variances within class and a more accurate search direction. In this case the SSAG algorithm takes a large step-size without deviating from the paths to optimal solutions.
\begin{figure}[!htbp]
\centering
\includegraphics[width=0.62\textwidth]{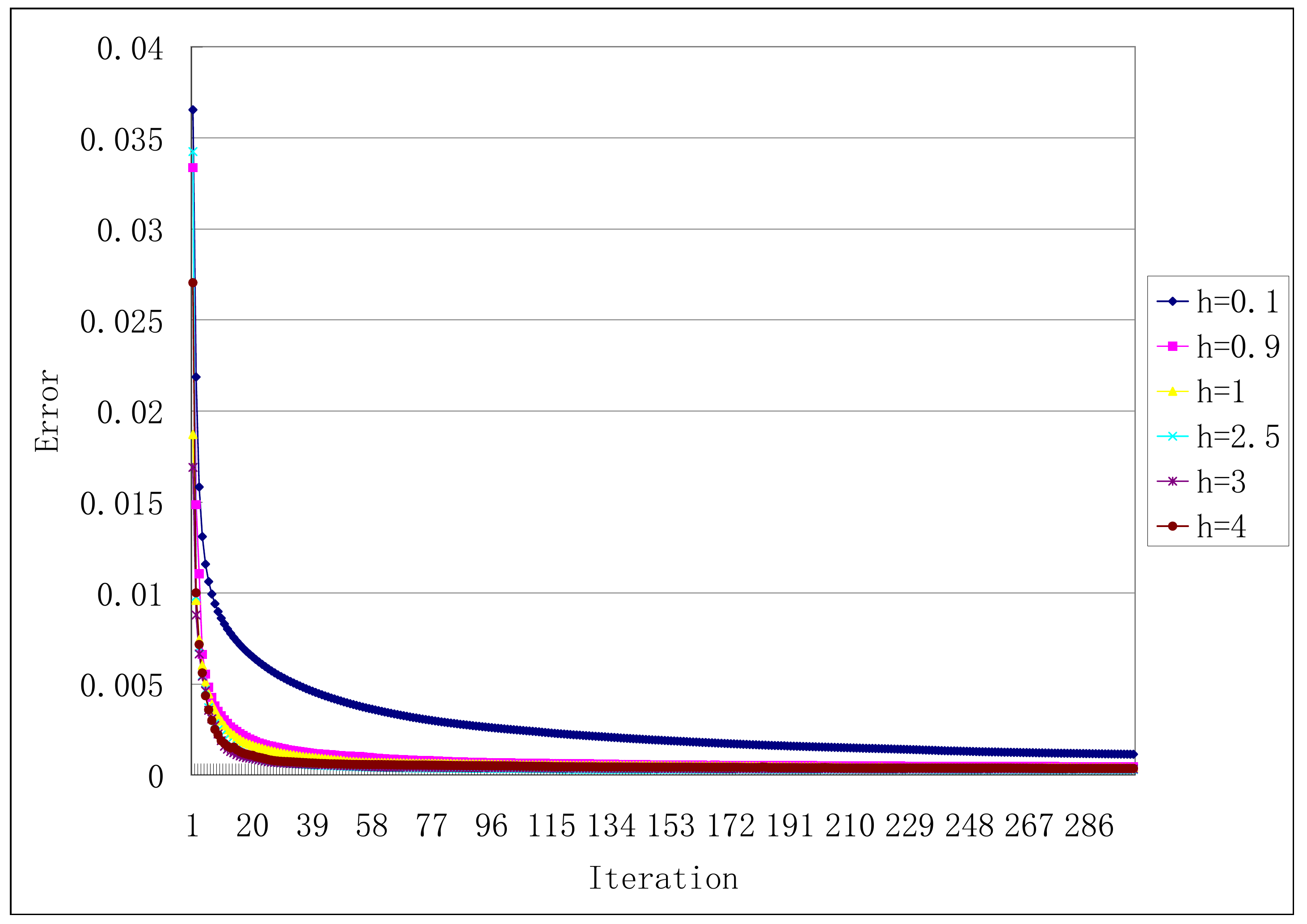}
\caption{The performance curve of SSAG with different step size (batch size $n=10$),For a large batch size,SSAG prefers to a large step size}\label{fig:EofB2}
\end{figure}
Further, we pick out the best step-size (h=$0.1$) of batch-size=$1$ and the optimal step-size (h=$0.4$) of batch-size=$10$, plot the performance curves under these settings in Figure \ref{fig:EofB3}. The two curves are nearly coincident, and this phenomenon shows that the gradient variances between classes dominate SSAG's convergence rate and verifies again the assertion that convergence rate of SSAG is independent of mini-batch size.
\begin{figure}[!htbp]
\centering
\includegraphics[width=0.62\textwidth]{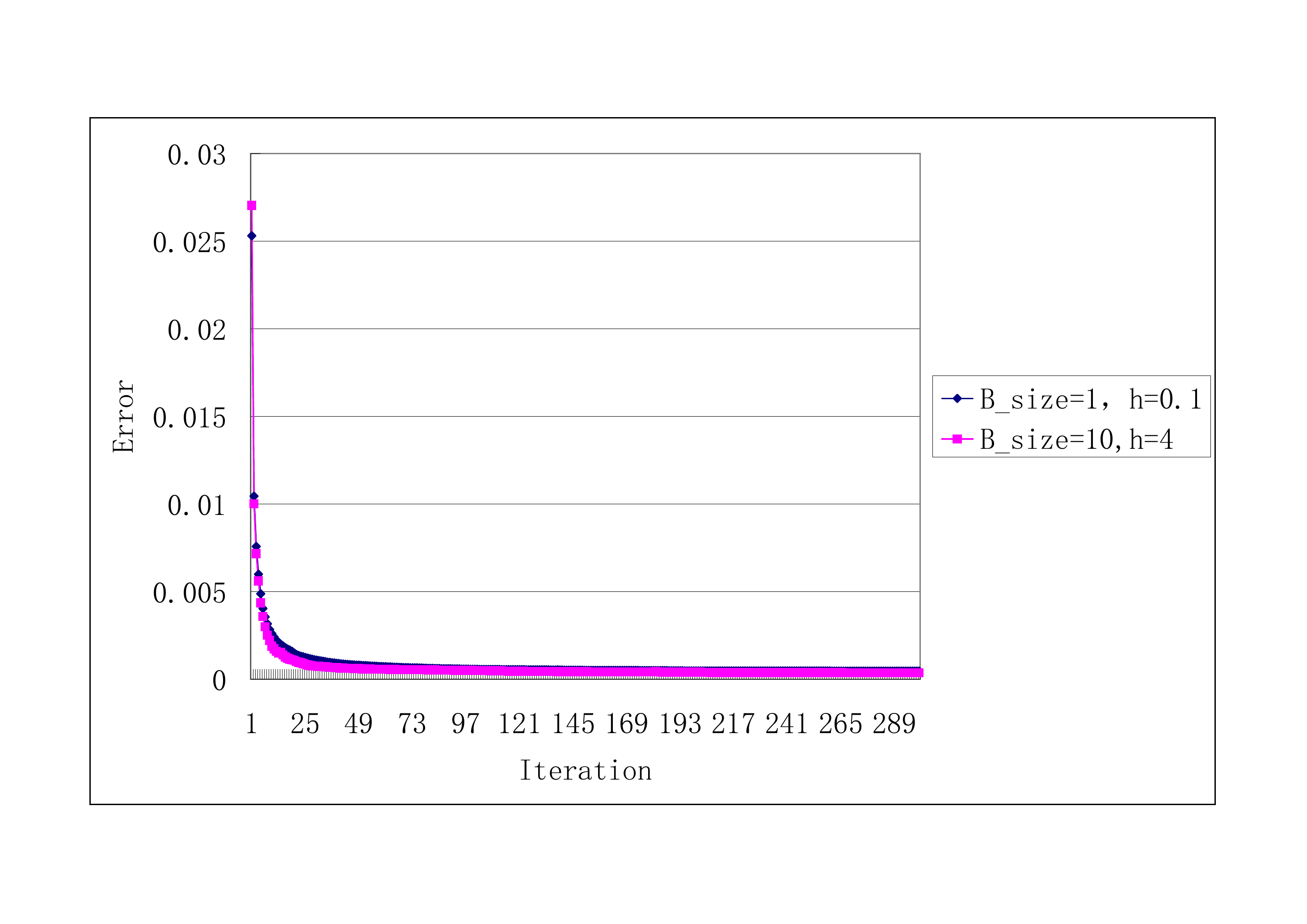}
\caption{The pink curve is learning curve of SSAG with batch-size$=10$ and step-size$=4$, the blue curve is learning curve of SSAG with batch-size$=1$ and step-size$=0.1$.The two curves are nearly coincident}\label{fig:EofB3}
\end{figure}
\subsection{The Effect of network's depth}
SSAG also performs well on deep neural networks. The upper left picture in Figure \ref{fig:depth} is SSAG's performance curves on different depth of network. SSAG can even train a six layers or deeper model and achieve a better recognition rate ($90.88\%$ in Table \ref{tab:depthofaccur}). This is much better than $73.66\%$ (Table \ref{tab:depthofaccur}) of SGD in the same model.
\begin{figure}[!htbp]
\centering
\includegraphics[width=0.62\textwidth]{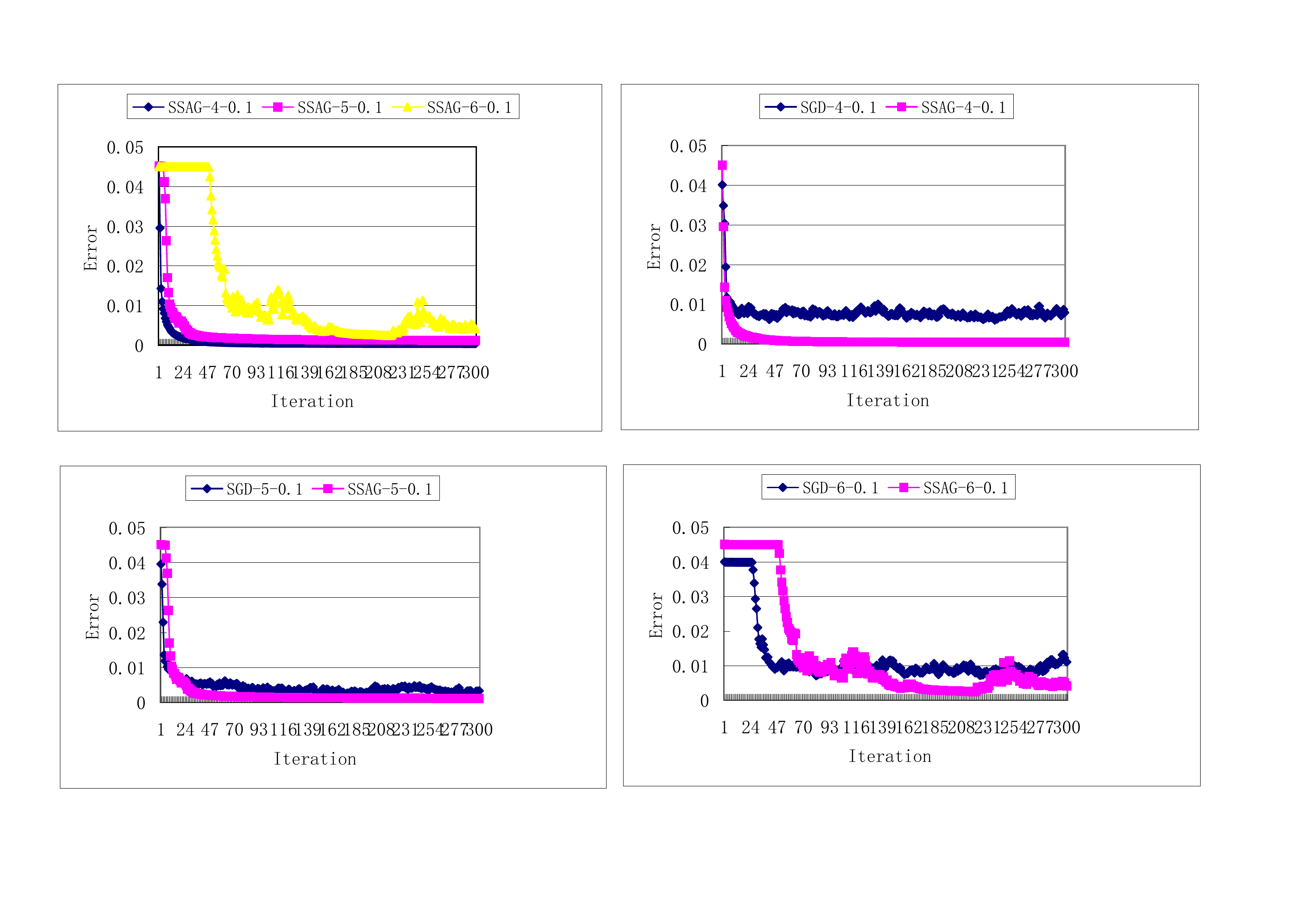}
\caption{The upper left picture is the learning curve of SSAG on different depth of network. The other three are paired comparisons of SGD and SSAG on a same network with a same step size}\label{fig:depth}
\end{figure}
Further we test the performance of SGD and SSAG on the same network with the same step-size by varying the depth of network. The comparisons of them are shown in other three pictures in Figure \ref{fig:depth}. From these comparisons, it is obvious that SSAG outperforms SGD more and more as the depth of the model increases.
\begin{table}[htbp]
 \centering
  \caption{The accuracy of SSAG and SGD on a 6 layers of neural network}
    \begin{tabular}{|r|c|c|c|c|c|}
          &       & \multicolumn{2}{c|}{SSAG(6 layers)} & \multicolumn{2}{c|}{SGD(6 layers)} \\
          \hline
    number & Test\_num & Ok\_num & Accur(\%) & Ok\_num & Accur(\%) \\
    \hline
    0     & 980   & 946   & 96.53 & 899   & 91.73 \\
    1     & 1135  & 1100  & 96.92 & 1009  & 88.90 \\
    2     & 1032  & 923   & 89.44 & 611   & 59.21 \\
    3     & 1010  & 885   & 87.62 & 777   & 76.93 \\
    4     & 982   & 916   & 93.28 & 427   & 43.48 \\
    5     & 892   & 768   & 86.10 & 738   & 82.74 \\
    6     & 958   & 878   & 91.65 & 832   & 86.85 \\
    7     & 1028  & 904   & 87.94 & 588   & 57.20 \\
    8     & 974   & 870   & 89.32 & 588   & 60.37 \\
    9     & 1009  & 898   & 89.00 & 894   & 88.60 \\
    \hline
    overall & 10000 & 9088  & 90.88 & 7363  & 73.63 \\
    \hline
    \end{tabular}%
  \label{tab:depthofaccur}%
\end{table}%
\section{Further Discussion}\label{sec:7}
SSAG embraces two techniques of both stratified sampling and averaging over history, to control gradient variance. These two approaches are also separately used in other algorithms.\\

SGD-ss, mini-batch SGD and SAG utilize averaging idea to reduce gradient variance, but the distinction among them lies in what they average over, mini-batch SGD and SGD-ss average over samples at the same iteration, while SAG works at the same way of SSAG, averaging over history at different iterations.\\

As is well-known in sampling theory, stratified sampling method may have small design effect, especially in the case of the variance within class of samples is small. So the SGD-ss adopts stratified sampling, not uniformly sampling used in mini-batch SGD, to reduce variance.\\

Averaging over iterations makes SSAG and SAG achieve linear convergence rate while retaining SGD's iteration cost. The reason behind this is that, both SSAG and SAG, like SGD, only need to calculate one sample's gradient at each iteration. However, SAG needs to store historical gradient to be computed at different iterations and maintain a $N$-dimension gradient vector, where upper case $N$ is the size of the training data set, leading to a huge storage requirement in massive data set. SSAG only needs a $C$-dimension vector and thus is much smaller than that of SAG. Moreover, SAG's convergence rate is $\mathcal{O}((1-\frac{\mu}{8NL})^k)$\cite{Schmidt2017,roux2012stochastic}. Theoretically it is a linear convergence rate, but it loses its linear convergence advantage when $N$ approaches infinity.\\

SVRG applies a completely different tactic to shrink gradient variance. It uses subtracting, not averaging idea, to control gradient variance.
Specifically, SVRG needs to store a network $W_1$, which is named as referenced network. At each iteration SVRG calculates a difference by subtracting gradient of referenced network on a randomly selected sample from gradient of current network on the same sample. Difference is added to the average gradient of $W_1$ on the whole training data that is pre-computed at outer loop, to determine the final update direction. The role of average gradient of referenced network is to keep the expectation of update direction unbiased.
The use of subtracting to reduce gradient is effective and SVRG reaches linear convergence rate. Compared with SAG and SAGA, SVRG's convergence rate is independent of the training data size $N$. But SVRG needs to maintain a referenced network, and calculate gradient twice for one randomly selected sample at each iteration. These requirements will be an issue in some practical situations, especially in the setting of very large scale data to train a very deep network.\\

SAGA adds an additional operator called \emph{prox} to determine a solution which satisfies some sparse property defined by the given measure. Essentially SAGA is at the midpoint between SVRG and SAG. SAGA and SVRG share common drawbacks.\\

The work in literature \cite{Zebang16} uses adaptive probability sampling method to reduce gradient variance, whose linear convergence rate $\mathcal{O}((1-\alpha_k)^k),\alpha_k=min\{\frac{1}{2N},\frac{\mu}{8L^{(k)}_A}\}$ also depends on the data size $N$.
\section{Conclusion}\label{sec:8}
In this paper, we present a CVI (Convergence-Variance Inequality) to formulate the relationship between gradient variance and convergence, and further develop a novel algorithm called SSAG accordingly. SSAG utilizes two techniques of both averaging over history and stratified sampling, to reduce gradient variance. This leads SSAG to converging in linear rate that depends on the category number $C$, instead of the data size $N$, while retaining low iteration costs and low storage requirements as SGD.

\section*{Appendix}
In this Appendix, we give the proofs of the two theorems.
\subsection*{A Proof of Theorem \ref{th:1}}
\subsubsection*{A.1 Preliminary notations}
To build the general convergent result we need the following notations:
\begin{itemize}
\item $\xi_k:N\rightarrow n_k$ denotes a sampling function, which is to select $n_k$ samples randomly from the population size of $N$. Sometimes, we use $n$, without subscript, to denote the sample size. $\xi_{k,i}$ is the $i^{th}$ random sample.
\item The ratio $f_k=\frac{n_k}{N}$, or $f=\frac{n}{N}$ is sampling ratio.
\item $G_{\xi_{k,i}}\!=\!G(W_k,\xi_{k,i})\!=\!\nabla J(W,b;x^{(\xi_{k,i})},y^{(\xi_{k,i})})$ is random gradient of the sample $(x^{(\xi_{k,i})},y^{(\xi_{k,i})})$.
\item $\bar{G}_k\!=\!\nabla J_N(W_k)=\frac{1}{N}\sum_{i=1}^NG(W_k,\xi_{k,i})$ is average gradient on population at the $k^{th}$ iteration.
\item $Var(G_k)\!=\!\frac{1}{N-1}\sum_{i=1}^{N}(G(W_k,\xi_{k,i})-\bar{G}_k)^2\!=\!\sigma_k^2$ is gradient variance on population at the $k^{th}$ iteration.
\item $\bar{G}_{\xi_k}\!=\!\nabla J_{n_k}(W_k)\!=\!\frac{1}{n_k}\!\sum_{i=1}^{n_k}G(W_k,\xi_{k,i})$ is average gradient on samples at the $k^{th}$ iteration.
\end{itemize}
The following well-known conclusions in sampling theory are needed as well:\\
\begin{equation}\label{eq:13}
E(G_{\xi_{k,i}})=E(\bar{G}_{\xi_{k}})=\bar{G}_k=\nabla J_N(W_k)
\end{equation}

\begin{equation}\label{eq:14}
Var(G_{\xi_{k,i}})\!=\!Var(\bar{G}_{\xi_{k}})\!=\!Var(G_k)\!=\!\sigma_k^2
\end{equation}

\begin{equation}\label{eq:15}
Var(\bar{G}_{\xi_k})\!=\!\frac{1-f_k}{n_k}\sigma_k^2
\end{equation}

\begin{equation}\label{eq:16}
E[\parallel\!\bar{G}_{\xi_k}\!\parallel^2_2]\!=\!\frac{1-f_k}{n_k}\sigma_k^2+\!\parallel\! E[\bar{G}_{\xi_k}]\!\parallel^2_2\!
\end{equation}

\subsubsection*{A.2 Outline of the proof}
The outline of the proof of CVI is that, starting from the continuity of $J(W)$ in Assumption $A1)$, we constantly change the inequality by using some known conditions and conclusions as well as the strongly convex property of $J(W)$, and thus form a decreasing series of expectation of $J(W^{k+1})\!-\!J^*$. This leads to the expected result finally.\\
\subsubsection*{A.3 The main proof}
\begin{proof}
Firstly, according to the continuity of $J(W)$ in assumption $A1)$, we have (see, e.g.,\cite{nesterov2013introductory}):
\begin{equation}
\begin{array}{l}
J(W^{k+1})-J(W^{k})\leq \nabla J(W^{k})^T(W^{k+1}-W^{k})+\frac{1}{2}L\parallel  W^{k+1}\!-\!W^{k}\parallel^2_2\nonumber
\end{array}
\end{equation}
Substitute formulae \ref{eq:6} into the above inequality and take expectation on both sides, we have:
\begin{equation}\label{eq:17}
\small
\begin{array}{rl}
E[J(W^{k+1})]-J(W^{k})\leq & -h_k\nabla J_N(W^{k})^TE[\bar{G}_{\xi_k}]+\frac{1}{2}LE[\parallel\bar{G}_{\xi_k}\parallel^2_2]h_k^2 \\
=&-h_k\nabla J_N(W^{k})^TE[\bar{G}_{\xi_k}]+\frac{1}{2}L(\frac{1-f_k}{n_k}\sigma_k^2+\parallel E[\bar{G}_{\xi_k}]\parallel^2_2)h_k^2\\
=&-h_k\nabla J_N(W^{k})^T\nabla J_N(W^{k})+\frac{1}{2}L(\frac{1-f_k}{n_k}\sigma_k^2+\parallel \nabla J_N(\!W^{k}\!)\!\parallel^2_2)h_k^2\\
=&(\frac{h_k^2L}{2}-h_k)\parallel \nabla J_N(W^{k})\parallel^2_2+\frac{h_k^2L}{2}\frac{1-f_k}{n_k}\sigma_k^2
\end{array}
\end{equation}
where the first equality uses formulae \ref{eq:16}, and the second equality uses formulae \ref{eq:13}.
Following formulae \ref{eq:12}, we have (also see, e.g.,\cite{nesterov2013introductory}):
\begin{equation}
2\mu(J(W)-J^*)\leq\parallel\nabla J(W) \parallel^2_2 \nonumber
\end{equation}
with the condition $h_k<\frac{2}{L}$, formulae \ref{eq:17} will be changed as
\begin{equation}
\begin{array}{rl}
E[J(W^{k+1})]\!-\!J(W^{k})\!\leq & \!(\frac{h_k^2L}{2}\!-\!h_k)\!\parallel\! \nabla J_N(W^{k})\!\parallel^2_2\!+\frac{h_k^2L}{2}\frac{1-f_k}{n_k}\sigma_k^2\\
\leq& (\frac{h_kL}{2}\!-1)2h_k\mu(J(W^{k})-J^*)+\frac{h_k^2L}{2}\frac{1-f_k}{n_k}\sigma_k^2 \nonumber
\end{array}
\end{equation}

Subtracting $J^*$ on both sides of the above inequality, taking expectation and ordering it, we have:
\begin{equation}
\begin{array}{l}
E[J(W^{k+1})-J^*]\leq (h_k^2\mu L-2h_k\mu+1)E[J(W^{k})-J^*]+\frac{h_k^2L}{2}\frac{1-f_k}{n_k}\sigma_k^2 \nonumber
\end{array}
\end{equation}
Let $\Lambda=\frac{h_kL(1-f)\sigma_k^2}{2\mu(2-h_k\mu)n}(\mu\neq 0,h_k\mu\neq 2)$, then:
\begin{equation}
\begin{array}{rl}
E[J(W^{k+1})]\!-\!J(W^{k})\!-\!\Lambda^{k}\leq & (h_k^2\mu L-2h_k\mu+1)(E[J(W^{k})\!-\!J^*]\!-\!\Lambda^{k})\\
\leq & (E[J(W)-J^*]\!-\!\Lambda)\prod\limits_{i=1}^k(h_i^2\mu L-2h_i\mu+1)\nonumber
\end{array}
\end{equation}
Let $\rho_i\!=\!(h_i^2\mu L-2h_i\mu+1)<1,\rho\!=\!\max\limits_{i=1,\cdots,k}\{\rho_i\}$, take $\rho$ back into above inequality and order it, the final result of Theorem \ref{th:1} is derived.
\end{proof}

\subsection*{B Proof of Theorem \ref{th:2}}
To obtain convergence results of SSAG, we need some preliminary notations and an important lemma.
\subsection*{B.1 Preliminary notations}
Denoting $z_i^k$ a random variable which takes the value $1\!-\!\frac{1}{C}$ with probability $\frac{1}{C}$ and $\!-\!\frac{1}{C}$ otherwise. Thus, we have $E(z_i^k)\!=\!0,Var(z_i^k)\!=\!\frac{1}{C}(1\!-\!\frac{1}{C}),E(z_i^kz_j^k)\!=\!-\!\frac{1}{C^2}$, element $\bar{G}_j^k$ in formulae \ref{eq:2} can be represented as the following formulae:\\
\begin{equation}
\bar{G}_j^{k}\!=\!(1\!-\!\frac{1}{C})\bar{G}_j^{k-1}\!+\!\frac{1}{C}\bar{\phi}_j^{k}\!+\!z_j^{k-1}[\bar{G}_j^{k-1}\!-\!\bar{\phi}_j^{k}]
\end{equation}
this leads to
\begin{equation}
\small
\begin{array}{rl}
W^{k+1}=&W^k-\frac{h_k}{C}\sum\limits_{j=1}^C[(1-\frac{1}{C})\bar{G}_j^{k-1}+\frac{1}{C}\bar{\phi}_j^k+z_j^{k-1}(\bar{G}_j^{k-1}-\bar{\phi}_j^k)]\\
=&W^k-\frac{h_k}{C}[(1-\frac{1}{C})e^T\bar{G}^{k-1}+\frac{1}{C}e^T\bar{\phi}^k+(z^{k-1})^T(\bar{G}^{k-1}-\bar{\phi}^k)]
\end{array}
\end{equation}
where
\begin{equation}
e\!=\!\left[
\begin{array}{c}
          I \\
          \vdots \\
          I
 \end{array}
 \right]\in  R^{CP\!\times\! P},\bar{G}(W)=\left[
 \begin{array}{c}
          \bar{G}_1(W) \\
          \vdots \\
          \bar{G}_C(W)
 \end{array}
 \right]\in R^{CP},z^k=\left[
 \begin{array}{c}
          z_1^kI \\
          \vdots \\
          Z_C^kI
 \end{array}
 \right]\nonumber
\end{equation}
So, vector form of $E(z_i^kz_j^k)$ can be represented as: $E[(z^k)(z^k)^T]\!=\!\frac{1}{C}I\!-\!\frac{1}{C^2}ee^T$,
let:
\begin{equation}
\theta^k\!=\!\left[
\begin{array}{c}
\bar{G}_1^k \\
 \vdots \\
 \bar{G}_C^k\\
 W^k
\end{array}
\right]\!=\!\left[
\begin{array}{c}
\bar{G}^k \\
 W^k
\end{array}
\right]\in R^{(C+1)\!\times\! P},\theta^*\!=\!\left[
\begin{array}{c}
\bar{G}_1(W^*) \\
 \vdots \\
 \bar{G}_C(W^*)\\
 W^*
\end{array}
\right]\!=\!\left[
\begin{array}{c}
\bar{G}(W^*) \\
W^*
\end{array}
\right]\!=\!\left[
\begin{array}{c}
0 \\
W^*
\end{array}
\right]\in R^{(C+1)P}\nonumber
\end{equation}
Here we denote $W^*$ optimal network. The system state of $\theta^k$ is a snapshot of network parameter $W^k$ and gradient information $\bar{G}^k$.\\

Finally, if $M$ is a $CP\!\times\! CP$ matrix and $m$ is a $CP\!\times\! P$ matrix, then:
\begin{itemize}
\item diag(M) is the $CP\!\times\! P$ matrix being the concatenation of the $C(P\!\times\! P)$-blocks on the diagonal of M;
\item Diag(m) is the $CP\!\times\! CP$ block-diagonal matrix whose $(P\!\times\! P)$-blocks on the diagonal are equal to the $(P\!\times\! P)$-blocks of m.
\end{itemize}
\subsection*{B.2 Important lemma}
In below proof, our Lyapunov function contains a term
\begin{equation}
\ell(\theta^{k+1})\!=\!(\theta^{k+1}\!-\!\theta^*)^T\left(
\begin{array}{cc}
A & b\\
b^T & \nu
\end{array}
\right)(\theta^{k+1}\!-\!\theta^*)\nonumber
\end{equation} for some values of $A, b$ and $\nu$. The lemma below computes the value of $\ell(\theta^{k+1})$ in terms of elements of $\theta^{k}$

\begin{MyLem}\cite{roux2012stochastic}\label{lem:1}
\begin{equation}
\begin{array}{l}
E[(\theta^{k+1}\!-\!\theta^*)^T\left(
\begin{array}{cc}
A & b\\
b^T & \nu
\end{array}
\right)(\theta^{k+1}\!-\!\theta^*)|\Gamma_{k}]\!\\
=\!E[((\bar{G}^{k+1}\!-\!\bar{G}(W^*))^T,(W^{k+1}-W^*)^T)
\left(
\begin{array}{cc}
A & b\\
b^T & \nu
\end{array}
\right)
\left(
\begin{array}{c}
\bar{G}^{k+1}\!-\!\bar{G}(W^*)
W^{k+1}-W^*
\end{array}
\right)|\Gamma_{k}]\\
=E[(\bar{G}^{k+1}\!-\!\bar{G}(W^*))^TA(\bar{G}^{k+1}\!-\!\bar{G}(W^*))\!+2(\bar{G}^{k+1}\!-\!\bar{G}(W^*))^T\!b(W^{k+1}\!-\!W^*)\\
\qquad \qquad \qquad \qquad \qquad \qquad \qquad \quad \quad \quad \quad \quad \quad \quad+(W^{k+1}\!-\!W^*)^T\nu(W^{k+1}\!-\!W^*)|\Gamma_{k}]\\
 \!=\!(\bar{G}^{k}\!-\!\bar{G}(W^*))^T[(1\!-\!\frac{2}{C})S\!+\!\frac{1}{C}Diag(diag(S))](\bar{G}^{k}\!-\!\bar{G}(W^*))\\
 \quad \!+\!\frac{1}{C}(\bar{G}(W^{k})\!-\!\bar{G}(W^*))^TDiag(diag(S))(\bar{G}(W^{k})\!-\!\bar{G}(W^*))\\
 \quad \!+\!\frac{2}{C}(\bar{G}^{k}\!-\!\bar{G}(W^*))^T[S\!-\!Diag(diag(S))](\bar{G}^{k}\!-\!\bar{G}(W^*))\\
 \quad \!+\!2(1\!-\!\frac{1}{C})(\bar{G}^{k}\!-\!\bar{G}(W^*))^T[b\!-\!\frac{h}{c}e\nu](W^{k}\!-\!W^*)\\
 \quad \!+\!\frac{2}{C}(\bar{G}(W^{k})\!-\!\bar{G}(W^*))^T[b\!-\!\frac{h}{C}e\nu](W^{k}\!-\!W^*)\\
 \quad \!+\!(W^{k}\!-\!W^*)^T\nu(W^{k}\!-\!W^*)\nonumber
\end{array}
\end{equation}
\end{MyLem}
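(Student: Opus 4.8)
The plan is to treat Lemma \ref{lem:1} as a direct conditional-moment computation in the spirit of \cite{roux2012stochastic}: substitute the affine recursions for $\bar{G}^{k+1}$ and $W^{k+1}$ into the quadratic form defining $\ell(\theta^{k+1})$, and then evaluate $E[\,\cdot\,|\,\Gamma_k]$ by exploiting that, conditionally on the history $\Gamma_k$, the only randomness comes from the class-selection variables $z^k$ and the freshly sampled stratified gradient $\bar{\phi}^{k+1}$. The recursion reads $\bar{G}^{k+1}=(1-\tfrac1C)\bar{G}^k+\tfrac1C\bar{\phi}^{k+1}+Z^k(\bar{G}^k-\bar{\phi}^{k+1})$ with $Z^k=\mathrm{Diag}(z^k)$, together with the unbiasedness $E[\bar{\phi}^{k+1}\,|\,\Gamma_k]=\bar{G}(W^k)$, while $W^{k+1}-W^*=w-\tfrac{h}{C}[(1-\tfrac1C)e^T\bar{G}^k+\tfrac1C e^T\bar{\phi}^{k+1}+(z^k)^T(\bar{G}^k-\bar{\phi}^{k+1})]$. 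Writing $u=\bar{G}^k-\bar{G}(W^*)$, $v=\bar{G}(W^k)-\bar{G}(W^*)$ and $w=W^k-W^*$, the whole argument is a bilinear expansion of these recursions followed by a careful collection of terms.

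First I would expand the quadratic form into its three natural blocks: the pure-$A$ piece $(\bar{G}^{k+1}-\bar{G}(W^*))^TA(\bar{G}^{k+1}-\bar{G}(W^*))$, the cross piece $2(\bar{G}^{k+1}-\bar{G}(W^*))^Tb(W^{k+1}-W^*)$, and the pure-$\nu$ piece $(W^{k+1}-W^*)^T\nu(W^{k+1}-W^*)$. Into each I substitute the recursions and sort every product into terms that are constant, linear, and quadratic in the selection variables. Taking $E[\,\cdot\,|\,\Gamma_k]$ then uses exactly the three facts recorded in the preliminary notations: $E[z^k]=0$, so all linear-in-$z^k$ contributions vanish; the second moment $E[(z^k)(z^k)^T]=\tfrac1C I-\tfrac1{C^2}ee^T$; and the unbiasedness $E[\bar{\phi}^{k+1}|\Gamma_k]=\bar{G}(W^k)$, which is what turns every surviving occurrence of $\bar{\phi}^{k+1}$ into the exact current gradient, i.e.\ into $v$.

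The key structural observation, which produces the block-diagonal operators in the statement, is the sandwich identity $E[Z^kMZ^k]=\tfrac1C\,\mathrm{Diag}(\mathrm{diag}(M))-\tfrac1{C^2}M$ for any $CP\times CP$ matrix $M$, which is just the matrix form of $E[z_i^kz_j^k]$ equalling $\tfrac1C(1-\tfrac1C)$ on the diagonal and $-\tfrac1{C^2}$ off it. Applying it with $M$ equal to the matrix $A$ of the quadratic form (written $S$ on the right-hand side) inside the quadratic-in-$z^k$ contributions is precisely what splits each coefficient into a full-matrix part and a $\mathrm{Diag}(\mathrm{diag}(S))$ part, and it is also what isolates the contribution of the stored state $\bar{G}^k$ from that of the true current gradient $\bar{G}(W^k)$ (the term weighted by $\tfrac1C\mathrm{Diag}(\mathrm{diag}(S))$). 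For the cross and pure-$\nu$ blocks, substituting the $W^{k+1}-W^*$ expression above and again killing the $z^k$-linear terms in expectation is what makes the combinations $b-\tfrac{h}{C}e\nu$ appear, carrying the weights $2(1-\tfrac1C)$ and $\tfrac2C$ on $u$ and $v$ respectively.

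I expect the main obstacle to be organizational rather than conceptual: keeping the bookkeeping straight among the four distinct vector quantities $\bar{G}^k$, $\bar{G}(W^k)$, $\bar{\phi}^{k+1}$ and $\bar{G}(W^*)$, and in particular never conflating the stored running-average gradient $\bar{G}^k$ (which is $\Gamma_k$-measurable) with the exact gradient $\bar{G}(W^k)$ at the current iterate (which enters only through $E[\bar{\phi}^{k+1}|\Gamma_k]$). The second delicate point is tracking which pieces retain the full matrix $S$ and which collapse to $\mathrm{Diag}(\mathrm{diag}(S))$ under the $E[Z^kMZ^k]$ identity, since a single sign or index slip there perturbs the coefficients $(1-\tfrac2C)$, $\tfrac1C$ and $\tfrac2C$ in the final expression. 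I would mitigate both by carrying out the expansion one block at a time and checking each resulting coefficient against the two sanity limits $C\to\infty$ (full averaging, no selection noise) and $S=\mathrm{Diag}(\mathrm{diag}(S))$ (block-diagonal $S$), where the stated formula must reduce to the obvious answers.
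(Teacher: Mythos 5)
Note first that the paper does not actually prove this lemma: it is quoted from \cite{roux2012stochastic} (with the data size $N$ replaced by the class number $C$), and the authors explicitly omit the proof, deferring to that reference. Your outline reproduces the computation from the cited source: expand the quadratic form along the affine recursion for $(\bar{G}^{k+1},W^{k+1})$, kill all terms linear in $z^k$ via $E[z^k]=0$, and collapse the quadratic-in-$z^k$ terms via the sandwich identity $E[Z^kMZ^k]=\tfrac{1}{C}\mathrm{Diag}(\mathrm{diag}(M))-\tfrac{1}{C^2}M$, which is indeed the mechanism producing the $(1-\tfrac{2}{C})S$, $\tfrac{1}{C}\mathrm{Diag}(\mathrm{diag}(S))$ and $\tfrac{2}{C}$ coefficients. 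One correction of detail: the matrix to which the sandwich identity applies is not ``the matrix $A$ of the quadratic form,'' as your parenthetical says, but the aggregate $S=A-\tfrac{h}{C}be^T-\tfrac{h}{C}eb^T+\tfrac{h^2}{C^2}e\nu e^T$, because the $z^k$-dependent increment enters the quadratic form twice, once through $\bar{G}^{k+1}$ (hit by $A$ and $b$) and once through $W^{k+1}$ (hit by $b^T$ and $\nu$); your block-by-block plan would surface this automatically, but as written the claim is inaccurate. Two typos in the paper are worth knowing: the sign of the $z$-term in the recursion should be $z_j^{k-1}[\bar{\phi}_j^{k}-\bar{G}_j^{k-1}]$ (immaterial to the lemma, since linear terms vanish in expectation and quadratic terms are even in the sign), and the third term of the stated right-hand side should be the cross term between $\bar{G}^{k}-\bar{G}(W^*)$ and $\bar{G}(W^{k})-\bar{G}(W^*)$, as in the original lemma, not a pure term in $\bar{G}^{k}-\bar{G}(W^*)$; this discrepancy is harmless in the paper's application only because $A,b,\nu$ are later chosen so that $S-\mathrm{Diag}(\mathrm{diag}(S))=0$.

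There is, however, one genuine gap in your argument as stated: you invoke the unbiasedness $E[\bar{\phi}^{k+1}\,|\,\Gamma_k]=\bar{G}(W^k)$ to ``turn every surviving occurrence of $\bar{\phi}^{k+1}$ into the exact current gradient,'' but that substitution is legitimate only for terms \emph{linear} in $\bar{\phi}^{k+1}$. After the sandwich identity, $\bar{\phi}^{k+1}$ survives \emph{quadratically} --- e.g. in $E[(\bar{G}^{k}-\bar{\phi}^{k+1})^TZ^kSZ^k(\bar{G}^{k}-\bar{\phi}^{k+1})\,|\,\Gamma_k]$ and in the pure-$\nu$ block through $\tfrac{1}{C}e^T\bar{\phi}^{k+1}$ --- and for such terms the conditional expectation produces, beyond the stated expression in $\bar{G}(W^k)$, additional trace-of-covariance contributions involving $\mathrm{Cov}(\bar{\phi}^{k+1}\,|\,\Gamma_k)$. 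In the setting of \cite{roux2012stochastic} the lemma is an exact identity precisely because there $\phi^{k+1}$ is the exact per-component gradient at $W^k$, hence $\Gamma_k$-deterministic; in SSAG, where $\bar{\phi}_j^{k+1}$ is a within-class mini-batch mean, the equality as stated holds only if those within-class variance terms are zero (full class gradients, $n=N_j$) or are silently dropped. To close this step you must either treat $\bar{\phi}^{k+1}$ as conditionally deterministic, matching the cited proof (and accepting that the lemma is then borrowed rather than adapted to stratified sampling), or explicitly carry and bound the extra covariance terms; unbiasedness alone does not do it.
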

with $S\!=\!A\!-\!\frac{h}{C}be^T\!-\!\frac{h}{C}eb^T\!+\!(\frac{h}{C})^2e\nu e^T$.
Note that for square $C\!\times\! C$ matrix, $diag(M)$ denotes a vector of size $C$ composed of the diagonal of $M$, while for a vector $m$ of dimension $C$, $Diag(m)$ is the $C\!\times\! C$ diagonal matrix with $m$ on its diagonal. Thus $Diag(diag(M))$ is a diagonal matrix with the diagonal elements of $M$ on its diagonal, and $diag(Diag(m))\!=\!m$.\\

Here we denote $W^*$ optimal network. The system state of $\theta^k$ is a snapshot of network parameter $W^k$ and gradient information $\bar{G}^k$.\\
Finally, if $M$ is a $CP\!\times\! CP$ matrix and $m$ is a $CP\!\times\! P$ matrix, then:
\begin{itemize}
\item diag(M) is the $CP\!\times\! P$ matrix being the concatenation of the $C(P\!\times\! P)$-blocks on the diagonal of M;
\item Diag(m) is the $CP\!\times\! CP$ block-diagonal matrix whose $(P\!\times\! P)$-blocks on the diagonal are equal to the $(P\!\times\! P)$-blocks of m.
\end{itemize}
The details of proof of lemma are omitted for simplicity, readers who are interested can reference the works of Nicolas Le Roux  in \cite{roux2012stochastic}.
\subsection*{B.3 The main proof}
\begin{proof}
To investigate the convergence rate, we need to show $\!\parallel\! W^{k+1}\!-\!W^*\!\parallel^2_2\!$ decay with iterations. In order to do this,
we need to find a Lyapunov function $\ell(\theta^{k+1})$ from $R^{(C+1)P}$ to $R$ such that sequence $E(\ell(\theta^{k+1}))$ decreases at a linear rate:
\begin{small}
\begin{equation}
\ell(\theta^{k+1})\!=\!(\theta^{k+1}\!-\!\theta^*)^T\left(
\begin{array}{cc}
A & b\\
b^T & \nu
\end{array}
\right)(\theta^{k+1}\!-\!\theta^*)\!=\!(\theta^{k+1}\!-\!\theta^*)^TM(\theta^{k+1}\!-\!\theta^*)\nonumber
\end{equation}
\end{small}

For the above Lyapunov function $\ell(\theta^{k+1})$, if there exists a matrix M and $\delta\!>\!0$ such that $E(\ell(\theta^{k+1})|\Gamma_{k})\!-\!(1\!-\!\delta)\ell(\theta^{k+1})\!<\!0$, where $\Gamma_{k}$ is the $\sigma$-field generated by$z^1,\cdots,z^{k}$, and $\ell(\theta^{k+1})\!\geq\! d\!\parallel\! W^{k+1}-W^*\parallel^2_2$, then we can prove Theorem \ref{th:2}.\\
\textbf{Step $1$: Linear convergence of the Lyapunov function}\\

Starting from the following equation:
\begin{equation}
E(\ell(\theta^{k+1})|\Gamma_{k})\!=\!E[(\theta^{k+1}\!-\!\theta^*)^T\left(
\begin{array}{cc}
A & b\\
b^T & \nu
\end{array}
\right)(\theta^{k+1}-\theta^*)|\Gamma_{k}]\nonumber
\end{equation}
Set $A\!=\!3h^2CI\!+\!\frac{h^2}{C}(\frac{1}{C}\!-\!2)ee^T,b\!=\!-h(1\!-\!\frac{1}{C})e,\nu\!=\!I$, then we have:
\begin{center}
$S\!=\!3h^2CI,S\!-\!Diag(diag(S))\!=\!3h^2CI\!-\!3h^2CI\!=\!0$, \\
\end{center}
this leads to(using Lemma \ref{lem:1}):
\begin{equation}\label{eq:20}
\begin{array}{l}
E[(\theta^{k+1}\!-\!\theta^*)^T\left(
\begin{array}{cc}
A & b\\
b^T & \nu
\end{array}
\right)(\theta^{k+1}\!-\!\theta^*)|\Gamma_{k}]\\
=(1\!-\!\frac{1}{C})3Ch^2(\bar{G}^{k}\!-\!\bar{G}(W^*))^T(\bar{G}^{k}\!-\!\bar{G}(W^*))\\
\quad \!+\!3h^2(\bar{G}(W^{k})\!-\!\bar{G}(W^*))^T(\bar{G}(W^{k})\!-\!\bar{G}(W^*))\\
\quad \!-\!2h(1\!-\!\frac{1}{C})(\bar{G}^{k}\!-\!\bar{G}(W^*))^Te(W^{k}\!-\!W^*)\\
\quad \!-\!\frac{2h}{C}(W^{k}\!-\!W^*)^Te^T(\bar{G}(W^{k})\!-\!\bar{G}(W^*))\\
\quad \!+\!(W^{k}\!-\!W^*)^T(W^{k}\!-\!W^*)
\end{array}
\end{equation}
The last term $\!\parallel\! W^{k}\!-\!W^*\!\parallel^2_2\!$ in formulae \ref{eq:20} is distance to optimal network from current network. This is an important measurement of convergence rate, following steps are to transform other terms in formulae \ref{eq:20} into the term $\!\parallel\! W^{k}\!-\!W^*\!\parallel^2_2\!$ by using a sequence of inequities.\\
For the second term in formulae \ref{eq:20}, assumes that we have $n$ random samples of class $c$, and $f$ is its ratio of sample to population, i.e., $\bar{G}_c\!=\!\frac{1}{n}\sum_{j=1}^nG_{cj}$. According to the Lipschitz continuity of gradient $G_{cj}$, we have:
\begin{equation}
\begin{array}{rl}
(\bar{G}(W^{k})\!-\!\bar{G}(W^*))^T(\bar{G}(W^{k})\!-\!\bar{G}(W^*))\!=&\!\sum\limits_{c=1}^C\!\parallel\! \bar{G}_c(W^{k})\!-\!\bar{G}_c(W^*) \!\parallel^2\!\\
\quad \!=&\!\sum\limits_{c=1}^C\!\parallel\! \frac{1}{n}\sum\limits_{j=1}^n(G_{cj}(W^{k})\!-\!G_{cj}(W^*))\!\parallel^2\!\\
\quad \!\leq &\!\sum\limits_{c=1}^C \frac{1}{n^2}\sum\limits_{j=1}^n\!\parallel\!(G_{cj}(W^{k})\!-\!G_{cj}(W^*))\!\parallel^2\!\\
\quad \!\leq &\!\sum\limits_{c=1}^C \frac{1}{n^2}\sum\limits_{j=1}^nL(G_{cj}(W^{k})\!-\!G_{cj}(W^*))^T(W^{k}\!-\!W^*)\\
\quad \!=& \!\sum\limits_{c=1}^C \frac{1}{n}L(\bar{G}_{c}(W^{k})\!-\!\bar{G}_{c}(W^*))^T(W^{k}\!-\!W^*)\\
\quad \!=& \!\frac{CL}{n}(\bar{\bar{G}}_{c}(W^{k})\!-\!\bar{\bar{G}}_{c}(W^*))^T(W^{k}\!-\!W^*)\nonumber
\end{array}
\end{equation}
For the fourth term in formulae \ref{eq:20}, we have
\begin{equation}
\centering
\begin{array}{rl}
(W^{k}-W^*)^Te^T(\bar{G}(W^{k})-\bar{G}(W^*))=&(W^{k}-W^*)^TC(\bar{\bar{G}}(W^{k})-\bar{\bar{G}}(W^*))\\
=&(W^{k}-W^*)^TC\bar{\bar{G}}(W^{k})\nonumber
\end{array}
\end{equation}
The rest terms in formulae \ref{eq:20} will be processed later. Thus formulae \ref{eq:20} will be changed into following form:
\begin{equation}\label{eq:21}
\begin{array}{rl}
E[(\theta^{k+1}\!-\!\theta^*)^T\left(
\begin{array}{cc}
A & b\\
b^T & \nu
\end{array}
\right)(\theta^{k+1}\!-\!\theta^*)|\Gamma_{k}]\leq & (1\!-\!\frac{1}{C})3Ch^2(\bar{G}^{k}\!-\!\bar{G}(W^*))^T(\bar{G}^{k}\!-\!\bar{G}(W^*))\\
 & +\!3h^2\frac{CL}{n}(\bar{\bar{G}}(W^{k}))\!-\!\bar{\bar{G}}(W^{*}))^T(W^{k}\!-\!W^*)\\
 & -\!2h(1\!-\!\frac{1}{C})(\bar{G}^{k}\!-\!\bar{G}(W^*))^Te(W^{k}\!-\!W^*)\\
 & -\!\frac{2h}{C}(W^{k}\!-\!W^*)^TC\bar{\bar{G}}(W^{k})\\
 & +\!(W^{k}\!-\!W^*)^T(W^{k}\!-\!W^*)
\end{array}
\end{equation}
Now we consider the term $(1\!-\!\delta)\ell(\theta^{k})$, we have:
\begin{equation}\label{eq:22}
\begin{array}{rl}
(1\!-\!\delta)\ell(\theta^{k})\!=\!& (1-\delta)(\theta^{k}\!-\!\theta^*)^T\left(
\begin{array}{cc}
A & b\\
b^T & \nu
\end{array}
\right)(\theta^{k}\!-\!\theta^*)\\
 \!=\!&(1\!-\!\delta)(\bar{G}^{k}\!-\!\bar{G}(W^*))^T[3Ch^2I\!+\!\frac{h^2}{n}(\frac{1}{n}\!-\!2)ee^T](\bar{G}^{k}\!-\!\bar{G}(W^*))\\
 &\!-\!2h(1\!-\!\delta)(1\!-\!\frac{1}{C})(\bar{G}^{k-1}\!-\!\bar{G}(W^*))^Te(W^{k}\!-\!W^*)\\
 &\!+\!(1\!-\!\delta)(W^{k}\!-\!W^*)^T(W^{k}\!-\!W^*)
\end{array}
\end{equation}
Summing all these same terms in formulae \ref{eq:21} and \ref{eq:22} together, we get following result:\\
\begin{equation}\label{eq:23}
\begin{array}{rl}
E(\ell(\theta^{k+1})|\Gamma_{k})\!-\!(1-\delta)\ell(\theta^{k})\leq &(\bar{G}^{k}\!-\!\bar{G}(W^*))^T[3Ch^2(\delta\!-\!\frac{1}{C})I\!+\!(1\!-\!\delta)\frac{h^2}{n}(2\!-\!\frac{1}{n})ee^T] (\bar{G}^{k}\!-\!\bar{G}(W^*))\\
&-\!2h\delta(1\!-\!\frac{1}{C})(\bar{G}^{k}\!-\!\bar{G}(W^*))^Te(W^{k}\!-\!W^*)\\
&\!-\!(2h\!-\!\frac{3h^2CL}{n})(W^{k}\!-\!W^*)^T\bar{\bar{G}}(W^{k})\\
&\!+\!\delta(W^{k}\!-\!W^*)^T(W^{k}\!-\!W^*)
\end{array}
\end{equation}
Note that for any symmetric negative definite matrix $R$ and for any vectors $u$ and $q$, we have
\begin{equation}
(u\!+\!\frac{1}{2}R^{-1}q)^TR(u\!+\!\frac{1}{2}R^{-1}q)\leq 0\nonumber
\end{equation}
and thus that
\begin{equation}
u^TRu\!+\!u^Tq\!\leq\! \!-\!\frac{1}{4}q^TR^{-1}q \nonumber
\end{equation}
using this fact with
\begin{equation}
u\!=\!\bar{G}^{k}\!-\!\bar{G}(W^*), q\!=\!-2h\delta(1\!-\!\frac{1}{C})e(W^{k}\!-\!W^*)\nonumber
\end{equation}
\begin{equation}
\begin{array}{rl}
R\!=\!&3Ch^2(\delta\!-\!\frac{1}{C})I\!+\!(1\!-\!\delta)\frac{h^2}{n}(2\!-\!\frac{1}{n})ee^T\\
\!=\!&3Ch^2(\delta\!-\!\frac{1}{C})(I\!-\!\frac{ee^T}{C})\!+\!h^2(3C\delta\!-1\!-\!2\delta+\frac{\delta-1}{C})\frac{ee^T}{C}\nonumber
\end{array}
\end{equation}
easily, we can verify:
\begin{equation}
\begin{array}{l}
R^{-1}=(3Ch^2(\delta-\frac{1}{C}))^{-1}(I\!-\!\frac{ee^T}{C})\\
\qquad \quad +(h^2(3C\delta-1-2\delta+\frac{\delta-1}{C}))^{-1}\frac{ee^T}{C} \nonumber
\end{array}
\end{equation}
A sufficient condition for $R$ to be negative definite is to have $\delta\leq \frac{1}{3C}$. Under this condition, the first two terms in formulae \ref{eq:23} can be converted into:
\begin{equation}
\small
\begin{array}{l}
(\bar{G}^{k}\!-\!\bar{G}(W^*))^T[3Ch^2(\delta\!-\!\frac{1}{C})I\!+\!(1\!-\!\delta)\frac{h^2}{n}(2\!-\!\frac{1}{n})ee^T]
(\bar{G}^{k}\!-\!\bar{G}(W^*))\!-\!2h\delta(1\!-\!\frac{1}{C})(\bar{G}^{k}\!-\!\bar{G}(W^*))^Te(W^{k}\!-\!W^*)\\
\leq-h^2\delta^2(1-\frac{1}{C})^2(W^{k}-W^*)^Te^T[3Ch^2(\delta-\frac{1}{C}))^{-1}(I-\frac{ee^T}{C})
+h^2(3C\delta-1-2\delta+\frac{\delta-1}{C})]^{-1}(W^{k}-W^*)\\
\!=\!-\frac{\delta^2(1\!-\!\frac{1}{C})^2C}{3C\delta-1\!-\!2\delta\!+\!\frac{\delta-1}{C}}\!\parallel\! W^{k}\!-\!W^* \!\parallel^2 \!\nonumber
\end{array}
\end{equation}
For the third term in formulae \ref{eq:23}, we use the strong convexity of gradient to get the inequality,
\begin{equation}
(W^{k}\!-\!W^*)^T\bar{\bar{G}}(W^{k})\!\geq\!\mu\!\parallel\! W^{k}\!-\!W^* \!\parallel^2\!\nonumber
\end{equation}
This yields the final bound
\begin{equation}\label{eq:24}
\begin{array}{l}
E(\ell(\theta^{k+1})|\Gamma_{k})-(1-\delta)\ell(\theta^{k})\leq-(2h-\frac{3h^2CL}{n}+\frac{\delta^2(1-\frac{1}{C})^2}{3C\delta-1-2\delta+\frac{\delta-1}{C}}\frac{C}{\mu}-\frac{\delta}{\mu})(W^{k}-W^*)^T\bar{\bar{G}}(W^{k})
\end{array}
\end{equation}
using $\delta\!=\!\frac{\mu}{8CL}$ and $h\!=\!\frac{1}{2CL}$ gives:\\
\begin{equation}
\begin{array}{rl}
2h\!-\!\frac{3h^2CL}{n}\!+\!\frac{\delta^2(1\!-\!\frac{1}{C})^2}{3C\delta-1\!-\!2\delta\!+\!\frac{\delta-1}{C}}\frac{C}{\mu}\!-\!\frac{\delta}{\mu}=&\frac{1}{CL}\!-\!\frac{3}{4nCL}\!-\!\frac{1}{8CL}\!-\!\frac{\delta^2(1\!-\!\frac{1}{C})^2}{1\!-\!3C\delta+2\delta\!+\!\frac{1\!-\!\delta}{C}}\frac{C}{\mu}\\
 \geq & \frac{1}{CL}-\frac{3}{4CL}-\frac{1}{8CL}-\frac{\delta^2}{1-3C\delta}\frac{C}{\mu}\\
 =&\frac{1}{8CL}\!-\!\frac{\mu/(64CL^2)}{1\!-\!3\mu/(8L)}\!\geq\!\frac{1}{8CL}\!-\!\frac{\mu/(64CL^2)}{1\!-\!3/8}\\
 =&\frac{1}{8CL}\!-\!\frac{\mu}{40CL^2}\!\geq\! \frac{1}{8CL}\!-\!\frac{\mu}{40CL}\\
 \geq& 0 \nonumber
\end{array}
\end{equation}
Hence,
\begin{equation}\label{eq:25}
E(\ell(\theta^{k+1})|\Gamma_{k})\!-\!(1\!-\!\delta)\ell(\theta^{k})\!\leq\! 0
\end{equation}
Then, we can take a full expectation on both sides, and prove the linear convergence of the sequence $E(\ell(\theta^{k+1})$ with rate
\begin{equation}
E(\ell(\theta^{k+1}))\!\leq\! (1\!-\!\delta)^k\ell(\theta^0)\!=\!(1\!-\!\frac{\mu}{8CL})^k\ell(\theta^0)
\end{equation}
\textbf{Step $2$: Domination of $\!\parallel\! W^{k+1}\!-\!W^*\!\parallel\!$ by $\ell(\theta^{k+1})$}\\
To complete the final proof of Theorem \ref{th:2}, we still need to prove that $\ell(\theta^{k+1})\geq d\parallel W^{k+1}-W^* \parallel^2_2$, this means  we need to proof following matrix is positive definite:
\begin{equation}
M'\!=\!M\!-\!\left(
\begin{array}{cc}
0 & 0\\
0 & dI
\end{array}
\right)\!=\! \left(
\begin{array}{cc}
A & b\\
b^T & (1-d)I
\end{array}
\right)
\end{equation}
We shall use the Schur complement condition for positive definiteness. According to the definition of Schur complement, Schur complment of submatrix $A$ of $M'$ is $M'/A\!=\!(1\!-\!d)I\!-\!b^TA^{-1}b$\\
Given symmetrical matrix $M'$, the Schur complement condition says, if $A$ is positive definite, then $M'$ is positive definite if and only if $M'/A$ is also positive definite. So we can choose an appropriate $d$ such that $M'/A$ is positive definite.
\begin{equation}
\begin{array}{rl}
M'/A\!=\!&(1-d)I-b^TA^{-1}b\\
\!=\!&(1-d)I\!-\!h^2(1-\frac{1}{C})^2e^T[3h^2C\!+\!\frac{h^2}{C}\!-\!2h^2]^{-1}e\\
\!=\!&(1-d)I\!-\!\frac{C(1-\frac{1}{C})^2}{3C\!+\!\frac{1}{C}\!-\!2}\frac{ee^T}{C}\\
\!\geq\!&(1-d)I\!-\!\frac{C}{3C-2}\frac{ee^T}{C}\nonumber
\end{array}
\end{equation}
From the last inequality above, if $d=1/3$, we can guarantee $M'/A$ is positive definition. Hence $M'$ is positive definite. This yields
\begin{equation}
E\!\parallel\! W^{k+1}\!-\!W^* \!\parallel^2\!\leq\!3E(\ell(\theta^{k+1}))\!\leq\! 3(1\!-\!\frac{\mu}{8CL})^k\ell(\theta^0)\nonumber
\end{equation}
Finally, we have:
\begin{equation}
\begin{array}{l}
\ell(\theta^0)\!=\!3h^2C\sum\limits_{i=1}^{C}\!\parallel\! \bar{G}^0_i-\bar{G}_i(W^*)\!\parallel^2\!+\!\frac{(1-2C)h^2}{C^2}\!\parallel\! \sum\limits_i\bar{G}_i^0 \parallel^2\\
\qquad \quad -2h(1-\frac{1}{C})(W^0\!-\!W^*)^T(\sum\limits_i\bar{G}^0_i)+\!\parallel\! W^0\!-\!W^*\!\parallel^2 \nonumber
\end{array}
\end{equation}
initializing $h\!=\!\frac{1}{2CL}$, $\bar{\phi}^0_i\!=\!0$, denoting $\sigma^2(W^*)\!=\!\frac{1\!-\!f}{n}\sum\limits_c\sigma^2_c(W^*)$ the variance with respect to optimal network $W^*$, we get
\begin{equation}
\ell(\theta^0)\!=\!\frac{3(1-f)}{4n}\sum\limits_c\sigma^2_c(W^*)\!+\!\parallel\! W^0-W^*\!\parallel^2\!\nonumber
\end{equation}
and:
\begin{equation}
\begin{array}{l}
E\parallel W^k\!-W^* \parallel^2\leq(1-\frac{\mu}{8CL})^k(\frac{9(1-f)}{4n}\sum\limits_c\sigma^2_c(W^*)+3\parallel W^0-W^* \parallel^2)\nonumber
\end{array}
\end{equation}
This concludes the proof.
\end{proof}



\begin{thebibliography}{}
%
%

\bibitem{Goodfellow2016}
A.~C. Ian~Goodfellow, Yoshua~Bengio, Deep Learning, The MIT Press, 2016.

\bibitem{NeverovaLCVL17}
N.~Neverova, P.~Luc, C.~Couprie, J.~J. Verbeek, Y.~LeCun, Predicting deeper
  into the future of semantic segmentation, abs/1703.07684 (2017).

\bibitem{KrizhevskySH12}
A.~Krizhevsky, I.~Sutskever, G.~E. Hinton, Imagenet classification with deep
  convolutional neural networks, in: Advances in Neural Information Processing
  Systems 25: 26th Annual Conference on Neural Information Processing Systems
  2012. Proceedings of a meeting held December 3-6, 2012, Lake Tahoe, Nevada,
  United States., 2012, pp. 1106--1114.

\bibitem{SercuPKL16}
T.~Sercu, C.~Puhrsch, B.~Kingsbury, Y.~LeCun, Very deep multilingual
  convolutional neural networks for {LVCSR}, in: 2016 {IEEE} International
  Conference on Acoustics, Speech and Signal Processing, {ICASSP} 2016,
  Shanghai, China, March 20-25, 2016, 2016, pp. 4955--4959.

\bibitem{ConneauSBL16}
A.~Conneau, H.~Schwenk, L.~Barrault, Y.~LeCun, Very deep convolutional networks
  for natural language processing, abs/1606.01781 (2016).

\bibitem{simonyan2014convolutional}
K.~Simonyan, A.~Zisserman, Very deep convolutional networks for large-scale
  image recognition, arxiv:1409.1556 (2014).

\bibitem{robbins1951stochastic}
H.~Robbins, S.~Monro, A stochastic approximation method, The annals of
  mathematical statistics (1951) 400--407.

\bibitem{Ruder2016An}
S.~Ruder, An overview of gradient descent optimization algorithms,
  arXiv:1609.04747 (2016).

\bibitem{nesterov2013introductory}
Y.~Nesterov, Introductory lectures on convex optimization: A basic course,
  Vol.~87, Springer Science \& Business Media, 2013.

\bibitem{Hazan2014}
E.~Hazan, S.~Kale, Beyond the regret minimization barrier: Optimal algorithms
  for stochastic strongly-convex optimization, J. Mach. Learn. Res. 15~(1)
  (2014) 2489--2512.

\bibitem{Li2014}
M.~Li, T.~Zhang, Y.~Chen, A.~J. Smola, Efficient mini-batch training for
  stochastic optimization, in: Proceedings of the 20th ACM SIGKDD International
  Conference on Knowledge Discovery and Data Mining, KDD '14, ACM, New York,
  NY, USA, 2014, pp. 661--670.

\bibitem{Cotter2011}
A.~Cotter, O.~Shamir, N.~Srebro, K.~Sridharan, Better mini-batch algorithms via
  accelerated gradient methods, in: J.~Shawe-Taylor, R.~S. Zemel, P.~L.
  Bartlett, F.~Pereira, K.~Q. Weinberger (Eds.), Advances in Neural Information
  Processing Systems 24, Curran Associates, Inc., 2011, pp. 1647--1655.

\bibitem{Zhao14}
Z.~T. Zhao~P, Accelerating minibatch stochastic gradient descent using
  stratified sampling, Mathematics (2014) 400--407.

\bibitem{Schmidt2017}
M.~Schmidt, N.~Le~Roux, F.~Bach, Minimizing finite sums with the stochastic
  average gradient, Mathematical Programming 162~(1) (2017) 83--112.

\bibitem{roux2012stochastic}
N.~L. Roux, M.~Schmidt, F.~R. Bach, A stochastic gradient method with an
  exponential convergence rate for finite training sets, in: Advances in Neural
  Information Processing Systems, 2012, pp. 2663--2671.

\bibitem{johnson2013accelerating}
R.~Johnson, T.~Zhang, Accelerating stochastic gradient descent using predictive
  variance reduction, in: Advances in Neural Information Processing Systems,
  2013, pp. 315--323.

\bibitem{defazio2014saga}
A.~Defazio, F.~Bach, S.~Lacoste-Julien, Saga: A fast incremental gradient
  method with support for non-strongly convex composite objectives, in:
  Advances in Neural Information Processing Systems, 2014, pp. 1646--1654.

\bibitem{Zebang16}
T.~Z. Zebang~Shen, Hui~Qian, T.~Mu, Adaptive variance reducing for stochastic
  gradient descent, in: Proceedings of the 25th International Joint conference
  on Artificial Intelligence, 2016, pp. 1990--1996.

\bibitem{Teo2007Scalable}
C.~H. Teo, A.~Smola, S.~V. Vishwanathan, Q.~V. Le, A scalable modular convex
  solver for regularized risk minimization, in: Proceedings of the 13th ACM
  SIGKDD International Conference on Knowledge Discovery and Data Mining, KDD
  '07, ACM, New York, NY, USA, 2007, pp. 727--736.

\bibitem{Cauchy1847}
A.~Cauchy, M\'{e}thode g\'{e}n\'{e}rale pour la r\'{e}solution des syst\'{e}mes
  d'\'{e}quations simultan\'{e}es (1847) 536--538.


\end{thebibliography}


\end{document}